%% file: iclr2027_conference.tex
\documentclass{article} 
\usepackage{iclr2027_conference,times}

\input{math_commands.tex}

\usepackage{bibunits}
\usepackage{amsthm}
\usepackage{algorithm}
\usepackage{algpseudocode}
\usepackage{hyperref}
\usepackage{url}
\usepackage{graphicx}
\usepackage[labelformat=simple]{subcaption}

\usepackage{wrapfig}
\usepackage{enumitem}
\usepackage[table]{xcolor}
\usepackage{booktabs}
\usepackage{prompt_style}
\usepackage{marvosym}

\newtheorem{proposition}{Proposition}

\title{Multi-Agent System Search via Active Substructure-aware Policy Optimization}
\author{}
\author{
  Beicheng Xu$^{\dagger}$ \quad
  Bowen Fan$^{\dagger}$ \quad
  Weitong Qian$^{\dagger}$ \quad
  Lingching Tung$^{\dagger}$ \quad
  Bin Cui$^{\dagger}$\textsuperscript{\Letter} \\
  $^{\dagger}$School of CS, Peking University, Beijing, China\\
  beichengxu@stu.pku.edu.cn, bin.cui@pku.edu.cn\\
  \Letter\ Corresponding author
}

\iclrfinalcopy

\begin{document}

\maketitle
\fancyhead[L]{Preprint}
\fancyhead[R]{}

\begin{abstract}
LLMs enable multi-agent systems (MAS) to tackle complex tasks, but manually designing agent roles, prompts, and communication structures requires substantial expertise and effort.
This motivates learning policies that construct query-specific MAS from execution reward.
Existing approaches typically train these policies by repeatedly traversing a fixed set of training queries and assigning rewards at the workflow level.
However, this overlooks differences in queries' evolving learning potential and obscures which substructures improve solution quality.
In this paper, we propose Active Substructure-aware Policy Optimization (ASPO), a RL framework for query-level MAS search.
ASPO introduces an Adaptive Query-Selection Mechanism (AQSM) that focuses training on queries at the policy's competence boundary: those it can solve but not yet reliably.
A complementary discovery mechanism widens architectural exploration for hard queries, helping distinguish insufficient exploration from operator capability limits.
Beyond query selection, ASPO introduces substructure-level rewards that measure output-quality gains within each action's descendant subgraph.
These rewards guide proximal policy optimization to reinforce useful architectural refinements and discourage redundant or harmful computation.
Together, these mechanisms prioritize learnable queries and provide fine-grained feedback for learning effective MASs.
Across six benchmarks spanning mathematical reasoning, general question answering, and code generation, ASPO ranks first on every benchmark against twelve baselines.
\end{abstract}

\input{tex/introduction}
\input{tex/problem_formulation}
\input{tex/method}
\input{tex/experiment}
\input{tex/conclusion}

\section*{AI use statement}
We used generative AI tools to polish the manuscript's wording and assist with writing scripts for experimental analysis.
Three authors have reviewed the language revisions to ensure that they preserved the intended meaning and technical accuracy.
AI-assisted scripts were inspected and tested for correctness by three authors, and their outputs were checked against the experimental records.
We take full responsibility for the final content of this work, including all AI-assisted text and code.

\section*{Ethics statement}
Our work studies automated multi-agent system design through reinforcement learning.
Given this, we foresee no issues regarding fairness, privacy, or security, or any other harmful societal or ethical implications outside broader considerations for the field itself.
However, we do note that reliance on expensive LLMs may still create financial barriers to access.

\section*{Reproducibility statement}
We provide ASPO code (\texttt{aspo/aspo\_code/}), benchmark splits (\texttt{aspo/aspo\_data/}), and installation, data preparation, training, and evaluation instructions in the supplementary material.
Appendix~\ref{app:benchmarks} documents dataset sources and curation, while Appendices~\ref{app:baseline-setups} and~\ref{app:hyperparameters} detail baseline configurations and ASPO implementation settings.
The training procedure is summarized in Appendix~\ref{app:aspo-algorithm}, with the operator pool specified in Appendix~\ref{app:operator-space}.
The assumptions and proofs underlying our query-selection analysis are provided in Appendix~\ref{app:query-selection-theory}.

\bibliographystyle{iclr2027_conference}
\bibliography{iclr2027_conference}

\clearpage
\appendix
\begin{bibunit}[iclr2027_conference]
\makeatletter
\let\ASPO@natlinkstart\hyper@natlinkstart
\let\ASPO@natlinkbreak\hyper@natlinkbreak
\let\ASPO@natanchorstart\hyper@natanchorstart
\def\hyper@natlinkstart#1{%
  \ASPO@natlinkstart{appendix.#1}}
\def\hyper@natlinkbreak#1#2{%
  \ASPO@natlinkbreak{#1}{appendix.#2}}
\def\hyper@natanchorstart#1{%
  \ASPO@natanchorstart{appendix.#1}}
\makeatother

\input{appendix/related_work.tex}
\input{appendix/limitations.tex}

\section{Technical Details}
\input{appendix/state_encoder.tex}
\input{appendix/semantic_similarity.tex}
\input{appendix/query_selection_theory.tex}
\input{appendix/algorithm.tex}

\section{Experimental Setup}
\input{appendix/benchmarks.tex}
\input{appendix/representative_benchmarks.tex}

\input{appendix/baseline_setups.tex}

\input{appendix/hyperparameters.tex}
\input{appendix/cost_analysis.tex}
\input{appendix/architecture_sensitivity.tex}

\input{appendix/operator_space.tex}

\renewcommand{\refname}{Appendix References}
\putbib[iclr2027_conference]
\end{bibunit}

\end{document}

%% file: math_commands.tex
\usepackage{amsmath,amsfonts,bm}

\def\eqref#1{equation~\ref{#1}}

\def\1{\bm{1}}

\DeclareMathAlphabet{\mathsfit}{\encodingdefault}{\sfdefault}{m}{sl}
\SetMathAlphabet{\mathsfit}{bold}{\encodingdefault}{\sfdefault}{bx}{n}



%% file: tex/introduction.tex
\section{Introduction}
\label{sec:intro}


Multi-agent systems (MAS) built on large language models (LLMs) combine complementary agent capabilities through communication~\citep{MetaGPT_hong2024metagpt}.
However, manually designing their roles, prompts, and communication structures requires substantial expertise and repeated trial and error~\citep{ADAS_hu2025automated,AFlow_zhang2025aflow,AgentSquare_shang2025agentsquare}.
In response, researchers have explored automating individual aspects of agentic systems, including prompt optimization~\citep{DSPy_khattab2024dspy,TextGrad_yuksekgonul2025optimizing} and multi-agent collaboration~\citep{AgentVerse_chen2024agentverse,DyLAN_liu2024a}.
More broadly, recent work has extended automation to the design of complete MASs.
One prominent direction is \textbf{task-level MAS search}, which seeks an effective MAS that can be reused across queries within a target task domain~\citep{GPTSwarm_pmlr-v235-zhuge24a,ADAS_hu2025automated,AgentSquare_shang2025agentsquare,BOAD_ICLR2026_0d89023e}.
For example, AFlow~\citep{AFlow_zhang2025aflow} represents workflows using code, and applies Monte Carlo Tree Search to guide their optimization.
MASS~\citep{MASS_ICLR2026_1ab4e0e8} optimizes agent prompts and searches a pruned space of workflow topologies.
EvoMAS~\citep{EvoMASHiL_wei2026evomas} uses evolutionary search to optimize workflow structures, agent roles, and prompts.
However, task-level search follows a largely ``one-size-fits-all'' design objective, seeking a single MAS for an entire task domain rather than tailoring the architecture to each query.

To address this limitation, \textbf{query-level MAS search} adapts architectural decisions to individual queries rather than deploying a single shared MAS.
Existing approaches can be grouped by how they construct query-specific MASs:
(i) The first category constructs MAS by selecting combinations of agents from a predefined pool~\citep{RAAS_yang2026raas}.
MaAS~\citep{MaAS_pmlr-v267-zhang25bi} uses an RL-trained controller to select operators layer by layer from an agentic supernet.
Puppeteer~\citep{Puppter_NEURIPS2025_f1320d2e} instead conditions each decision on the query and accumulated intermediate outputs.
(ii) The second category adapts communication graphs over a predefined agent pool~\citep{CARD_ICLR2026_fdc05db3,RADAR_zhang2026radar,Selforg_ICLR2026_43833a8a}.
G-Designer~\citep{G-Designer_pmlr-v267-zhang25cu} uses a variational graph autoencoder to generate communication graphs, 
whereas MAGE~\citep{MAGE_zhao2026navigating} uses energy-guided diffusion to generate communication graphs.
(iii) The third category uses meta-agents to generate MASs.
Some approaches use prompted LLMs to construct and revise workflows, such as AutoAgents~\citep{AutoAgents_10.24963/ijcai.2024/3}, EvoMAS~\citep{EvoMASGeneration_hu2026evomas}, and ASPEC~\citep{ASPEC_ICLR2026_9230cf96}.
Others train MAS-generation policies through supervised fine-tuning~\citep{MAS-GPT_pmlr-v267-ye25g}, preference optimization~\citep{Scoreflow_wang2025scoreflow,MAS2_ICLR2026_b8831817}, or reinforcement learning (RL)~\citep{MASOrchestra_ke2026masorchestra,Conductor_ICLR2026_dbb8193a,Flowreasoner_gao2025flowreasoner}.

\textbf{Reward-driven policy learning.} In query-level MAS search, discrete architectural decisions require downstream execution to evaluate their utility, making direct end-to-end differentiation difficult.
Accordingly, most query-adaptive methods discussed above learn MAS policies from execution feedback through policy-gradient RL or preference optimization.
Specifically, these policies guide operator and agent selection~\citep{MaAS_pmlr-v267-zhang25bi,RAAS_yang2026raas,Puppter_NEURIPS2025_f1320d2e}, communication topology generation~\citep{G-Designer_pmlr-v267-zhang25cu,RADAR_zhang2026radar}, complete workflow generation~\citep{MASOrchestra_ke2026masorchestra,MAS2_ICLR2026_b8831817,Conductor_ICLR2026_dbb8193a}, or higher-level decisions about architecture reuse~\citep{ASPEC_ICLR2026_9230cf96}.
However, this widespread adoption raises a fundamental question: \textit{do these methods actually obtain informative execution feedback for learning a MAS policy?}
We examine this question from two perspectives: identifying queries that provide useful learning signals and distinguishing MAS substructures that contribute to successful execution.

\begin{figure}[t]
    \centering
    \includegraphics[width=\linewidth]{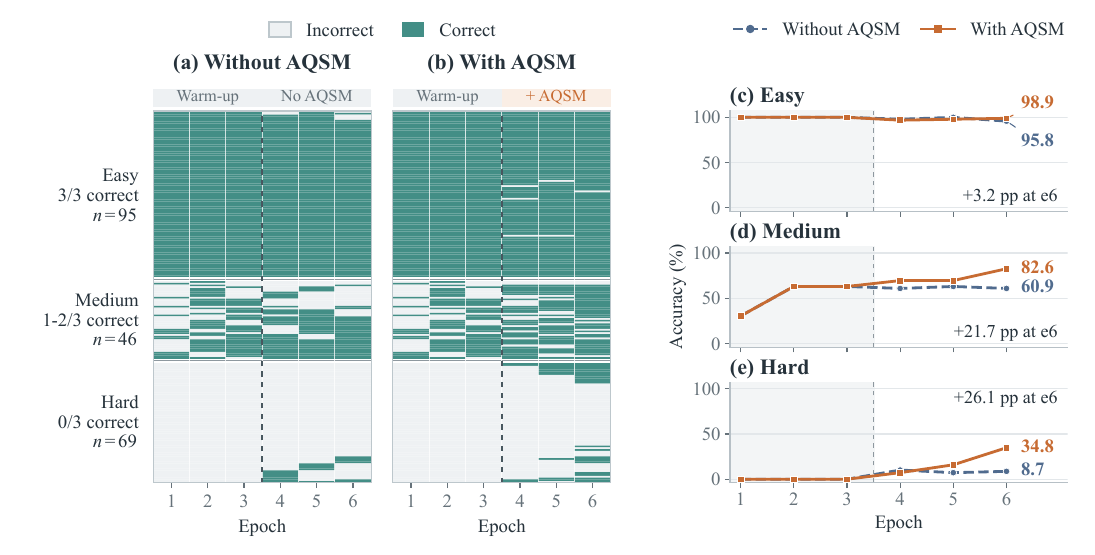}
    \caption{Query-level outcomes and group-wise accuracy on MATH, evaluated on the full training set after each training epoch, with and without AQSM.
    Queries are grouped by their outcomes without AQSM during warm-up (epochs 1--3): easy (3/3 correct), medium (1--2/3), and hard (0/3).}
    \label{fig:query-learning-al}
\vspace{-1em}
\end{figure}

\textbf{Question \#1. \textit{Uneven query informativeness:}} \textit{Does policy learning prioritize queries that provide informative execution feedback?}
Existing reward-driven approaches typically revisit a fixed set of training queries over multiple epochs, without adapting query selection to their evolving learning value.
To examine query-level learning dynamics, we train our RL-based MAS search method with GPT-4o-mini on 210 MATH training queries for six epochs.
Figure~\ref{fig:query-learning-al}(a) shows the results under the strategy of uniformly revisiting all queries.
Across all six epochs, 90 queries (42.9\%) are consistently answered correctly, 66 (31.4\%) exhibit both correct and incorrect outcomes, and 54 (25.7\%) are never answered correctly.
These trajectories suggest that queries differ in their remaining learning potential.
Consistently solved queries offer limited room for improvement, while persistently unsuccessful ones may exceed the capabilities of the current model and operator pool.
Uniformly revisiting both groups can therefore consume training budget while providing little correctness-based feedback to guide architectural decisions.
\textbf{Conclusion \#1: These observations motivate adaptive training query selection: identifying queries that remain learnable and prioritizing those whose execution feedback can most effectively improve the policy.}

\textbf{Question \#2. \textit{Coarse-grained substructure attribution:}} \textit{Does execution feedback reveal which MAS substructures improve answer quality?}
Existing reward-driven approaches commonly optimize architectural policies using workflow-level feedback, which may reinforce unnecessary steps in successful workflows or overlook useful steps in failed ones~\citep{MaAS_pmlr-v267-zhang25bi,Conductor_ICLR2026_dbb8193a,Scoreflow_wang2025scoreflow,MAS2_ICLR2026_b8831817}, thereby limiting the policy's ability to learn effective and efficient multi-agent systems.
We compare our method with MaAS~\citep{MaAS_pmlr-v267-zhang25bi} on MATH after training, scoring each layer's answer as $1$ if correct and otherwise by its BERT-based similarity to the reference answer.
For our method, average answer quality increases by $+0.033$, $+0.030$, $+0.024$, and $+0.018$ across successive transitions from L1 to L5.
By contrast, MaAS yields gains of $+0.026$, $+0.005$, and $+0.012$ over the first three transitions, but answer quality drops by $0.011$ from L4 to L5.
\textbf{Conclusion \#2: This motivates fine-grained credit assignment to reinforce MAS decisions that improve answer quality and discourage redundant or harmful computation.}

Building on these two conclusions, we propose Active Substructure-aware Policy Optimization (ASPO), a framework for query-level MAS search.
To address Question~\#1, we introduce an Adaptive Query-Selection Mechanism (AQSM) that prioritizes queries with remaining learning potential.
Specifically, after warm-up epochs, we maintain a success-probability posterior for each query and prioritize queries at the current policy's competence boundary, namely those it can solve but not yet reliably.
This reduces training effort spent on queries that are already consistently solved or unsolvable with the available agents.
For queries not yet solved, a discovery mechanism explores additional rollouts and retains useful trajectories to avoid mistaking insufficient exploration for inability.
To evaluate AQSM, we group queries in Figure~\ref{fig:query-learning-al}(a, b) by their outcomes without AQSM over the three warm-up epochs: easy ($n=95$), medium ($n=46$), and hard ($n=69$).
After identical warm-up procedures, we compare per-group accuracy from epoch 4 onward, when AQSM is enabled in one run.
As shown in Figure~\ref{fig:query-learning-al}(c--e), overall accuracy without AQSM barely improves after warm-up, rising from 59.0\% to 59.5\%.
By contrast, AQSM raises overall accuracy to 74.3\%.
Its largest gains occur on medium and hard queries, where accuracy reaches 82.6\% and 34.8\%, compared with 60.9\% and 8.7\% without AQSM, while maintaining high accuracy on easy queries.
To address Question~\#2, we define a substructure-level reward based on improvements in answer quality from parent agents to their children.
We then optimize the MAS policy using Proximal Policy Optimization (PPO) with these localized rewards, reinforcing decisions that produce useful downstream refinements.
Appendix~\ref{app:related-work} places these two components in the context of related work on adaptive data selection and credit assignment in other domains.
Our contributions are as follows:
\begin{itemize}[leftmargin=1.2em,labelsep=0.4em,topsep=0pt,partopsep=0pt,itemsep=2pt,parsep=0pt]
    \item We identify two limitations of execution feedback in query-level MAS policy learning: uneven query informativeness and coarse-grained substructure attribution.
    \item We propose an active, substructure-aware RL framework that combines AQSM for adaptive query selection with substructure-level rewards for policy optimization.
    \item We evaluate ASPO on six benchmarks spanning mathematical reasoning, general question answering, and code generation, demonstrating its consistent superiority over twelve baselines.
\end{itemize}

%% file: tex/problem_formulation.tex
\section{Problem Formulation}
\label{sec:problem-formulation}

\textbf{Multi-agent systems.}
Following MaAS~\citep{MaAS_pmlr-v267-zhang25bi}, we define the search space through a pool of agentic operators $\mathcal{O}$.
Each operator specifies an LLM-based operation for generating or refining a response.
A MAS is represented as a directed acyclic graph $\mathcal{G}=(\mathcal{V},\mathcal{E})$, where each node instantiates an operator from $\mathcal{O}$ and each edge specifies the flow of outputs between nodes.
Given a training set $\mathcal{D}_{\mathrm{train}}=\{(q_n,y_n)\}_{n=1}^{N}$ of queries and reference answers, executing a graph on query $q_n$ produces intermediate responses and a final answer $\hat{y}_n$.

\textbf{Query-level MAS search.}
We aim to learn an architectural policy $\pi_\theta$ that sequentially selects operators and their connections conditioned on the query and the current partial graph.
This policy induces a distribution $\pi_\theta(\mathcal{G}\mid q)$ over valid graphs in the search space.
Let $U(\mathcal{G};q)$ denote the task utility obtained by executing $\mathcal{G}$ on query $q$.
We optimize the policy to maximize expected utility:
\begin{equation}
    \max_{\theta}\;
    \mathbb{E}_{(q,y)\sim\mathcal{D}_{\mathrm{train}}}
    \mathbb{E}_{\mathcal{G}\sim\pi_\theta(\cdot\mid q)}
    \left[U(\mathcal{G};q)\right].
    \label{eq:search-objective}
\end{equation}
With a limited budget, we study how to extract more informative training feedback to learn $\pi_\theta$.

%% file: tex/method.tex
\section{Method}
\label{sec:method}

\begin{figure}[t]
    \centering
    \includegraphics[width=\linewidth]{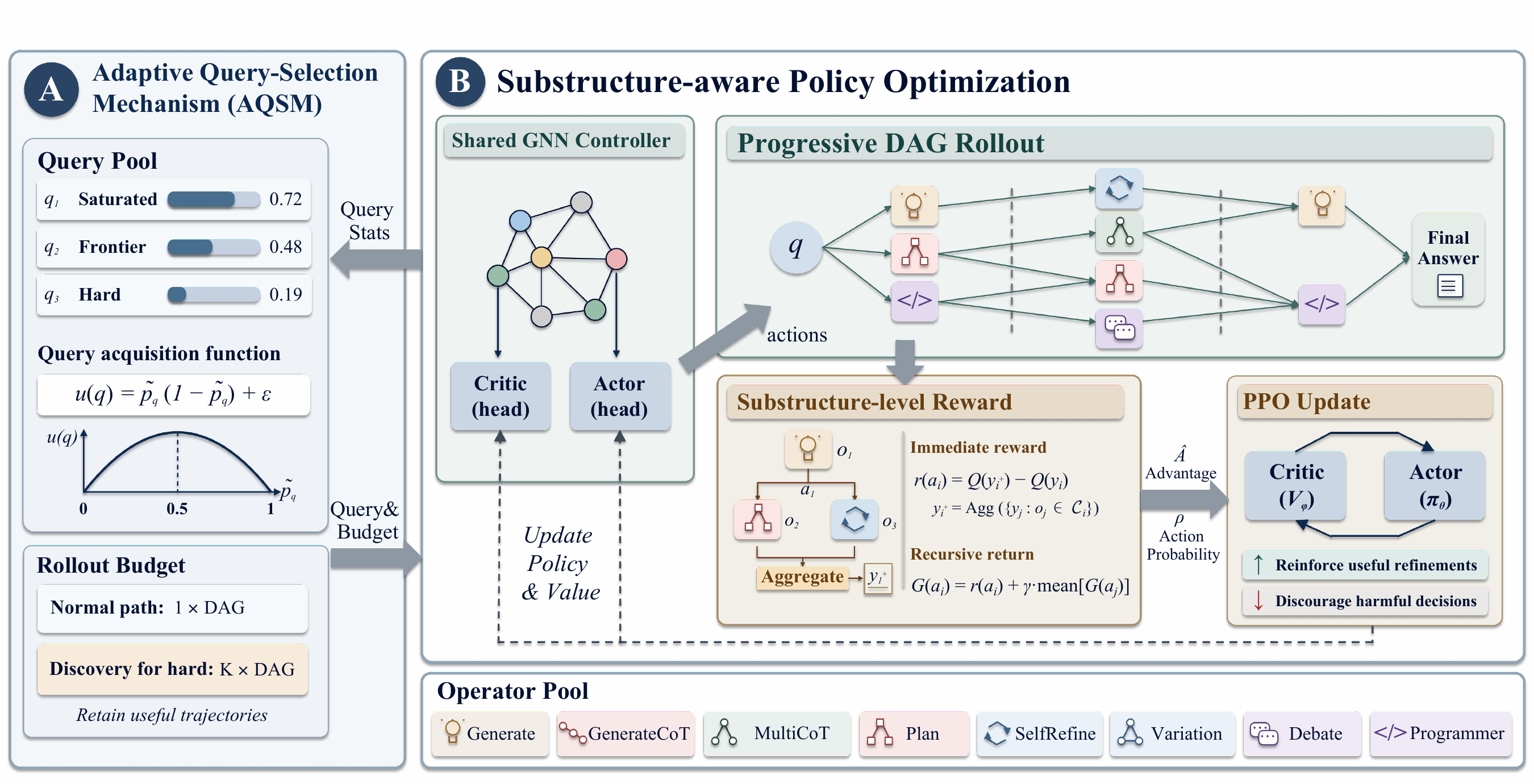}
    \caption{Overview of ASPO.
    (A) AQSM uses execution feedback to select queries and allocate rollouts.
    (B) The policy constructs query-specific MASs and learns from substructure-level rewards.}
    \label{fig:aspo-overview}
\vspace{-1em}
\end{figure}

In this section, we introduce ASPO, a framework for query-level MAS search through \underline{\textbf{A}}ctive \underline{\textbf{S}}ubstructure-aware
  \underline{\textbf{P}}olicy \underline{\textbf{O}}ptimization.
  As illustrated in Figure~\ref{fig:aspo-overview}, ASPO comprises two main modules:
  (i) adaptive query selection (Section~\ref{sec:aqsm}), which prioritizes training queries based on their evolving learning potential;
  and
  (ii) substructure-aware policy optimization (Section~\ref{sec:substructure-policy-optimization}), which uses quality improvements within each action's
  descendant subgraph to compute localized returns for PPO updates.
  For each selected query, the architectural policy progressively constructs and executes a MAS (Section~\ref{sec:mas-construction}), providing execution
  feedback for both modules.

\subsection{Policy-guided MAS Construction}
\label{sec:mas-construction}
We formulate MAS construction as a sequence of graph-expansion actions, starting from a root node representing the input query.
A GCN-based state encoder captures the query and the current partial graph to guide these actions.
Conditioned on this representation, the policy determines whether and how to expand each parent node using operators from $\mathcal{O}$.

\textbf{State representation.}
To represent the query and the current partial graph, we use a state encoder based on a graph convolutional network (GCN)~\citep{GNN_kipf2017semisupervised}.
At expansion action $i$, let $\mathcal{G}_i$ denote the partial graph and $v_i$ the parent node to be expanded.
We first project each node's operator embedding into the hidden space, then apply a GCN along parent-to-child edges with self-loops to obtain node representations.
These representations are combined through attention pooling to form the graph representation $\mathbf{h}_{\mathcal{G}_i}$.
We then concatenate $\mathbf{h}_{\mathcal{G}_i}$ with the parent representation $\mathbf{h}_{v_i}$ and query encoding $\mathbf{h}_q$ to form the state:
$
    \mathbf{s}_i =
    [\mathbf{h}_{\mathcal{G}_i}
    \,\Vert\, \mathbf{h}_{v_i}
    \,\Vert\, \mathbf{h}_q],
$
where $\mathbf{h}_q$ is obtained using a lightweight sentence encoder such as MiniLM~\citep{MiniLM_NEURIPS2020_3f5ee243} or Sentence-BERT~\citep{Sentence-Bert_reimers-gurevych-2019-sentence}.
The resulting state representation serves as a shared input to the actor and critic used in the policy optimization below.
Implementation details are provided in Appendix~\ref{app:state-encoder}.

\textbf{Progressive DAG rollout.}
Given the state representation $\mathbf{s}_i$, the policy expands the current parent node $v_i$ through the progressive DAG rollout shown in Figure~\ref{fig:aspo-overview}.
The actor implements this expansion through two action heads: a branching head and an operator-selection head.
The branching head samples the number of children $k_i$ from $\pi_\theta^{\mathrm{branch}}(k_i\mid\mathbf{s}_i)$, where $k_i=0$ stops expansion of the current parent.
When $k_i>0$, the operator-selection head sequentially samples $k_i$ compatible operators without replacement from $\pi_\theta^{\mathrm{op}}(o_{i,j}\mid\mathbf{s}_i,o_{i,<j})$, where $o_{i,<j}$ denotes the preceding selections.
We denote the action for expanding $v_i$ by $a_i=(k_i,o_{i,1},\ldots,o_{i,k_i})$, whose probability factorizes as
\begin{equation}
    \pi_\theta(a_i\mid\mathbf{s}_i)
    =
    \pi_\theta^{\mathrm{branch}}(k_i\mid\mathbf{s}_i)
    \prod\nolimits_{j=1}^{k_i}
    \pi_\theta^{\mathrm{op}}
    (o_{i,j}\mid\mathbf{s}_i,o_{i,<j}),
    \label{eq:architecture-action}
\end{equation}
where $o_{i,j}\in\mathcal{O}$ denotes the operator selected for the $j$-th child.
Each operator executes on the query and the outputs of its parent nodes.
When parents in the same layer select the same operator, they share a single child node that receives all their outputs.
Construction and execution proceed layer by layer until no operator node remains available for expansion or the construction budget is exhausted, such as reaching the maximum depth.
At that point, the outputs of all leaf operators are aggregated, for example through ensembling or voting, to produce the final answer.

\subsection{Substructure-aware Policy Optimization}
\label{sec:substructure-policy-optimization}
Building on the policy-guided MAS construction described above, we optimize the architectural policy using substructure-level execution feedback, as shown in Figure~\ref{fig:aspo-overview}(B).
For each action, we measure output-quality improvements within its descendant subgraph and aggregate them into an action-specific return.
These returns guide PPO updates.

\textbf{Substructure-level rewards.}
We assign rewards based on the quality improvement produced by each graph-expansion action.
For node output $y_i$ and reference solution $y^*$, we define
\begin{equation}
    Q(y_i) = \operatorname{sim}(y_i,y^*) + U_{\mathrm{task}}(y_i),
    \qquad U_{\mathrm{task}}(y_i)\in[0,1],
    \label{eq:output-quality}
\end{equation}
where $U_{\mathrm{task}}(y_i)\in[0,1]$ measures task completion using the benchmark evaluator, such as answer correctness or the fraction of test cases passed.
However, this task-specific utility alone can provide sparse learning signals.
To provide a finer-grained auxiliary signal, we introduce $\operatorname{sim}(y_i,y^*)\in[0,1]$, which measures alignment with the reference solution using a task-dependent similarity function (detailed in Appendix~\ref{app:semantic-similarity}).
For action $a_i$ expanding node $v_i$, we then aggregate the outputs of its children $\mathcal{C}_i$ and measure their collective improvement over the parent:
\begin{equation}
    y_i^{+} = \operatorname{Agg}\bigl(\{y_j : j \in \mathcal{C}_i\}\bigr),
    \qquad
    r(a_i) = Q(y_i^{+}) - Q(y_i).
    \label{eq:substructure-reward}
\end{equation}
This local aggregation evaluates the collective outcome of an expansion rather than scoring each child independently.
Finally, to account for downstream refinements and execution cost, we define the substructure-level rewards $G(a_i)$ over the descendant subgraph:
\begin{equation}
    G(a_i)
    =
    r(a_i)
    +
    \frac{\gamma}{|\mathcal{C}_i|}
    \sum_{j \in \mathcal{C}_i} G(a_j)
    - \lambda\,\Delta C_i,
    \label{eq:substructure-return}
\end{equation}
where $a_j$ denotes the expansion action at child node $v_j$, $\gamma\in[0,1]$ discounts more distant returns, and $\lambda\geq0$ weights the incremental execution cost $\Delta C_i$,
measured in FLOPs or tokens~\citep{pmlr-v235-sardana24a}.
For stop actions ($k_i=0$), we set $r(a_i)=G(a_i)=0$.
Each action thus receives a return determined by its own descendant subgraph rather than a shared workflow-level reward.

\textbf{PPO updates.}
Using the substructure-level rewards, we now optimize the architectural policy $\pi_\theta$ as the actor in PPO~\citep{PPO_schulman2017proximal}.
To estimate the advantage of each expansion action, a critic $V_\phi(\mathbf{s}_i)$ first predicts the expected return from expanding the current parent node.
For rollouts collected under $\pi_{\theta_{\mathrm{old}}}$, this gives the following advantage estimate and action probability ratio:
\begin{equation}
    \hat{A}_i = G(a_i)-V_{\phi_{\mathrm{old}}}(\mathbf{s}_i),
    \qquad
    \rho_i(\theta)=
    \frac{\pi_\theta(a_i\mid\mathbf{s}_i)}
    {\pi_{\theta_{\mathrm{old}}}(a_i\mid\mathbf{s}_i)},
    \label{eq:ppo-advantage-ratio}
\end{equation}
where the action probability follows Equation~\ref{eq:architecture-action} and $\phi_{\mathrm{old}}$ denotes the critic parameters used during rollout collection.
The actor maximizes the clipped surrogate objective
\begin{equation}
    \mathcal{L}_{\mathrm{policy}}(\theta)
    = \widehat{\mathbb{E}}_i\!\left[
        \min\!\left(
            \rho_i(\theta)\hat{A}_i,
            \operatorname{clip}\!\left(\rho_i(\theta),1-\epsilon_{\mathrm{clip}},1+\epsilon_{\mathrm{clip}}\right)\hat{A}_i
        \right)
    \right],
    \label{eq:ppo-policy-objective}
\end{equation}
where $\widehat{\mathbb{E}}_i$ denotes the empirical average over collected expansion actions and $\epsilon_{\mathrm{clip}}$ controls clipping.
This objective encourages positive-advantage actions and suppresses negative-advantage actions.
On the other hand, the critic is trained to predict the expected substructure returns by minimizing
\begin{equation}
    \mathcal{L}_{\mathrm{value}}(\phi)
    = \widehat{\mathbb{E}}_i\!\left[
        \left(V_\phi(\mathbf{s}_i)-G(a_i)\right)^2
    \right].
    \label{eq:ppo-value-loss}
\end{equation}
The actor and critic share the state encoder, which receives gradients from both objectives.

\subsection{Adaptive Query-Selection Mechanism}
\label{sec:aqsm}
While substructure-level rewards provide localized feedback within each rollout, their usefulness also depends on the queries selected for rollouts.
Instead of uniformly revisiting training queries as existing methods do, our Adaptive Query-Selection Mechanism (AQSM), shown in
Figure~\ref{fig:aspo-overview}(A), prioritizes queries based on their evolving learning potential and activates discovery for hard queries.

\textbf{Competence-boundary query selection.}
We seek to allocate more training effort to queries at the current policy's competence boundary, rather than those it consistently solves or cannot yet solve with the available operators.
Our insight is that queries the policy can solve but not yet reliably retain room for improvement and may provide informative execution feedback for policy learning.
We therefore estimate each query's success probability under $\pi_\theta$ as a proxy for its current learning potential.
For each query $q$, we maintain discounted success and failure counts, $S_q$ and $F_q$.
Starting from a uniform prior, we represent its success probability $p_q$ using a Beta posterior:
\begin{equation}
    p_q \sim \operatorname{Beta}(1+S_q,\,1+F_q).
    \label{eq:query-success-posterior}
\end{equation}
After each standard rollout, we update the counts using its binary success indicator $b_q\in\{0,1\}$:
\begin{equation}
    S_q \leftarrow \delta S_q+b_q,
    \qquad
    F_q \leftarrow \delta F_q+(1-b_q),
    \label{eq:query-posterior-update}
\end{equation}
where $\delta\in(0,1]$ discounts earlier observations so that the estimate can track changes in policy capability.
To prioritize queries at the competence boundary, we use binary outcome variance as a proxy for learning potential.
For each query, we sample $\tilde{p}_q$ from its current posterior and compute:
\begin{equation}
    w(q)=\tilde{p}_q(1-\tilde{p}_q),
    \label{eq:query-selection-weight}
\end{equation}
which peaks at $\tilde{p}_q=0.5$, favoring queries that are solvable but not yet reliably solved.
Appendix~\ref{app:query-selection-theory} motivates this choice through an improvement bound proportional to $p_q(1-p_q)$ for a gradient step on query success probability, and explains how the discounted posterior sustains exploration throughout training.
After $T_{\mathrm{w}}$ warm-up epochs, we activate \textbf{adaptive query selection} by redrawing $\tilde{p}_q$ before each batch and sampling queries in proportion to the resulting scores.

\textbf{Discovery for difficult queries.}
However, queries downweighted for remaining consistently unsolved may fall into two cases: those beyond the current capabilities of the available models and operators, and those for which effective architectures have not yet been explored.
We therefore introduce a \textbf{discovery mode} to distinguish insufficient exploration from capability limits.
Specifically, after the warm-up epochs, a sampled query enters discovery mode if none of its previous rollouts has yielded a correct answer.
For each query in discovery mode, we collect $K$ rollouts under $\pi_{\theta_{\mathrm{old}}}$, subject to the remaining per-epoch budget, compared with one for each other sampled query.
To focus PPO updates on informative outcomes, we retain a discovery trajectory only if its final answer is correct or at least one architectural action has a positive substructure advantage, $\hat{A}_i>0$.
Discovery thus helps distinguish underexplored queries from those beyond current capabilities, helping them move toward the competence boundary and increase their future sampling weights.

\subsection{Overall Algorithm}
\label{sec:overall-algorithm}
Algorithm~\ref{alg:aspo} in Appendix~\ref{app:aspo-algorithm} summarizes the ASPO training procedure.
After warm-up, AQSM selects queries for training and activates discovery for sampled queries that have never been solved.
For each query, the policy progressively constructs and executes MAS, using substructure rewards for PPO updates.
Once trained, the policy $\pi_\theta$ constructs a MAS tailored to each query at inference.

%% file: tex/experiment.tex
\section{Experiments}
\label{sec:experiments}

\subsection{Experimental Setup}
\label{sec:exp-setup}

\textbf{Benchmarks and metrics.}
We evaluate ASPO on six public benchmarks covering three domains:
(i) \textbf{mathematical reasoning}, GSM8K~\citep{GSM8K_cobbe2021training} and MATH~\citep{MATH_hendrycks2021measuring};
(ii) \textbf{general question answering}, HotpotQA~\citep{HotpotQA_yang2018hotpotqa} and MMLU-Pro~\citep{MMLUPro_wang2024mmlu};
and (iii) \textbf{code generation}, MBPP+ and HumanEval+~\citep{EvalPlus_liu2023your}.
We report accuracy for GSM8K, MATH and MMLU-Pro, F1 for HotpotQA, and pass@1 for MBPP+ and HumanEval+.
Data sources, selection procedures, and train/test splits are detailed in Appendix~\ref{app:benchmarks}.

\textbf{Baselines.}
We compare ASPO with twelve baselines spanning four categories:
(i) \textbf{single-agent methods}, Vanilla and CoT~\citep{CoT_wei2022chain};
(ii) \textbf{multi-agent collaboration}, AgentVerse~\citep{AgentVerse_chen2024agentverse} and DyLAN~\citep{DyLAN_liu2024a};
(iii) \textbf{task-level MAS search}, ADAS~\citep{ADAS_hu2025automated}, AFlow~\citep{AFlow_zhang2025aflow} and EvoMAS~\citep{EvoMASHiL_wei2026evomas};
and (iv) \textbf{query-level MAS search}, which we group following Section~\ref{sec:intro}: \emph{agent selection}, MaAS~\citep{MaAS_pmlr-v267-zhang25bi}, RAAS~\citep{RAAS_yang2026raas} and Puppeteer~\citep{Puppter_NEURIPS2025_f1320d2e} (ASPO's category and its most direct baselines); \emph{communication topology}, RADAR~\citep{RADAR_zhang2026radar}; and \emph{meta-agents}, ASPEC~\citep{ASPEC_ICLR2026_9230cf96}.
Each baseline is run with its official implementation, as detailed in Appendix~\ref{app:baseline-setups}.

\textbf{Implementation details.}
The search space of ASPO comprises ten agents described in Appendix~\ref{app:operator-space}, which is also shared with the three agent-selection baselines.
We use \texttt{gpt-4o-mini-2024-07-18}~\citep{gpt-4o-mini} as the LLM backbone for all methods, with temperature $1$ for ASPO.
We set the maximum number of layers to $5$ and the maximum number of children per node to $2$.
For the substructure reward, we set the cost coefficient as $\lambda=0.01$.
For AQSM, we discount query statistics with $\delta=0.9$ and use a discovery budget of $K=3$ rollouts per query.
We train for $T=6$ epochs with $T_{\mathrm{w}}=3$ warm-up epochs, and cap the number of rollouts per epoch at the size of the training set.
Appendix~\ref{app:hyperparameters} details PPO training and aggregation rules, and provides a summary table linking hyperparameter settings to their sensitivity analyses.

\subsection{Main Results}
\label{sec:exp-main}
\input{tex/main_table}

Table~\ref{tab:main} reports performance on the six benchmarks for ASPO and the twelve baselines.
From these results we draw three conclusions.
\textbf{(i) ASPO ranks first on every benchmark, while no baseline holds a stable position.}
Its average of $78.19$ exceeds the $74.06$ of the strongest baseline, DyLAN, whose average rank is $3.50$.
The baselines are far less consistent: MaAS ranks second on MATH but eleventh on MMLU-Pro, Puppeteer third on MBPP+ but thirteenth on HumanEval+, and the runner-up position rotates among DyLAN, RADAR and MaAS.
\textbf{(ii) The gain holds in every task domain.}
Averaged within each domain, ASPO leads the strongest baseline for that domain by $4.1$ points on mathematical reasoning, $3.5$ on code generation and $3.0$ on general question answering, and the identity of that baseline changes from domain to domain.
\textbf{(iii) With the same agent pool and the same learning paradigm, the advantage comes from how the policy is trained.}
MaAS, RAAS and Puppeteer also learn agent selection from execution feedback with RL, yet ASPO exceeds the best of the three by $2.3$ to $5.8$ points on every benchmark.
We attribute this to the two components that distinguish our framework: AQSM focuses rollouts on queries with remaining learning potential, and substructure-level rewards credit individual expansion actions rather than scoring the MAS as a whole.
Section~\ref{sec:ablation} isolates the contribution of each component.

\begin{figure}[t]
    \centering
    \begin{subfigure}[b]{0.57\linewidth}
        \centering
        \includegraphics[width=\linewidth]{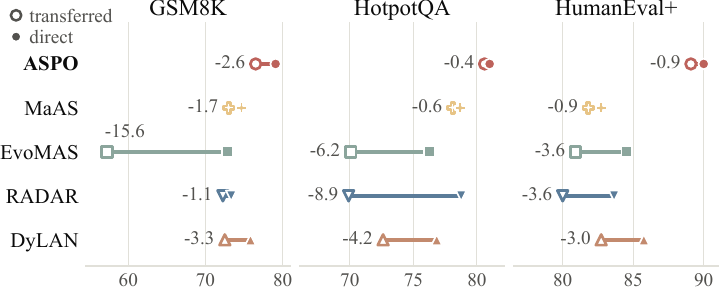}
        \caption{Task transfer: gap between transfer and direct training.}
        \label{fig:domain-transfer}
    \end{subfigure}\hfill
    \begin{subfigure}[b]{0.415\linewidth}
        \centering
        \includegraphics[width=\linewidth]{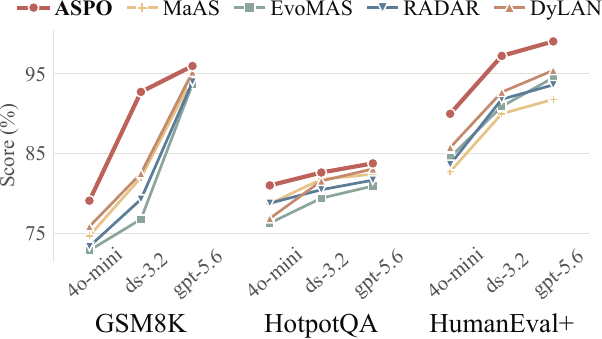}
        \caption{LLM transfer: score per backbone.}
        \label{fig:llm-transfer}
    \end{subfigure}
    \caption{Transferability of the learned policy, for the top-5 methods on representative benchmarks.}
    \label{fig:transfer}
\end{figure}

\subsection{Transferability}
\label{sec:exp-transfer}
In this section, we compare the top five methods in Table~\ref{tab:main} to evaluate whether they transfer to task domains and LLM backbones that differ from those used during training.
To limit experimental cost, this and subsequent analyses use three representative benchmarks: GSM8K, HotpotQA, and HumanEval+, selected by their agreement with the overall method ranking (Appendix~\ref{app:representative-benchmarks}).
(i) \textbf{Transfer across task domains.}
We train each method on MATH and evaluate it on the three representative benchmarks without further adaptation, using \texttt{gpt-4o-mini} throughout.
Figure~\ref{fig:domain-transfer} compares these results with training directly on each target benchmark.
ASPO retains the highest score on all three benchmarks, while ASPO and MaAS show smaller average performance drops than the other methods.
By contrast, EvoMAS transfers a fixed MAS optimized on MATH and loses $8.5$ points on average.
DyLAN and RADAR retain the agent team or role pool fixed during training, limiting their ability to adapt to the target task's role requirements.
Their performance therefore drops to varying degrees depending on how well the retained roles suit the target task, with DyLAN losing $3.5$ points on average and RADAR losing $8.9$ points on HotpotQA.
(ii) \textbf{Transfer across LLM backbones.}
Here, training and inference use the same benchmark but different LLM backbones: each method is trained with \texttt{gpt-4o-mini} and evaluated with \texttt{deepseek-v3.2}~\citep{deepseek-v3.2} or \texttt{gpt-5.6-terra}~\citep{gpt-5.6-terra}.
As shown in Figure~\ref{fig:llm-transfer}, the ordering of the five methods remains largely consistent across different LLM backbones, with ASPO ranking first in all nine benchmark--backbone combinations.
This suggests that the comparative advantage of the learned policy persists across execution LLMs.

\begin{figure}[t!]
    \centering
    \begin{minipage}[b]{0.44\linewidth}
        \centering
        \includegraphics[width=\linewidth]{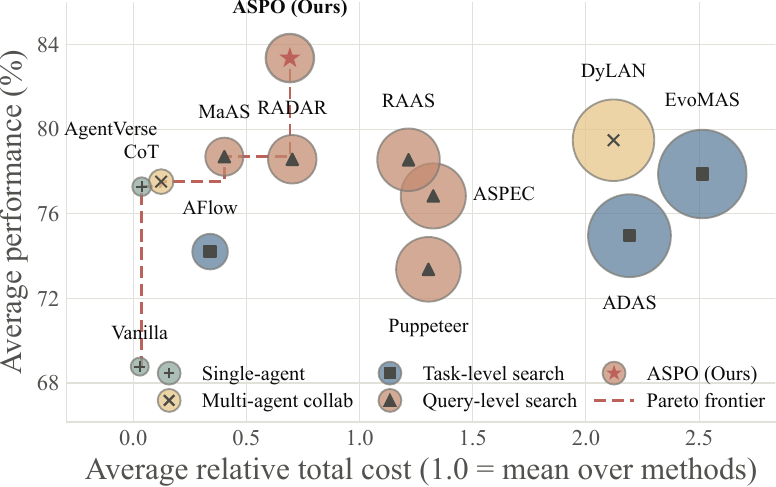}
        \caption{Cost-performance trade-off.}
        \label{fig:cost-bubble}
    \end{minipage}\hfill
    \begin{minipage}[b]{0.55\linewidth}
        \centering
        \footnotesize
        \setlength{\tabcolsep}{2.5pt}
        \scalebox{0.85}{%
        \begin{tabular}{lcccc}
        \toprule
          & \textbf{GSM8K} & \textbf{HotpotQA} & \textbf{HumanEval+} & \textbf{Avg.} \\
        \midrule
        \textbf{ASPO} & \textbf{79.08} & \textbf{81.00} & \textbf{90.00} & \textbf{83.36} \\
        \midrule
        \addlinespace[4pt]
        \rowcolor{gray!15}
        \multicolumn{5}{c}{\textit{Adaptive query selection (Section~\ref{sec:aqsm})}} \\
        \addlinespace[3pt]
        \textit{w/o} discovery   & 76.75\textsubscript{$-$2.33} & 79.28\textsubscript{$-$1.72} & 89.09\textsubscript{$-$0.91} & 81.71\textsubscript{$-$1.65} \\
        \textit{w/o} posterior   & 75.75\textsubscript{$-$3.33} & 79.04\textsubscript{$-$1.96} & 87.27\textsubscript{$-$2.73} & 80.69\textsubscript{$-$2.67} \\
        \textit{w/o} AQSM        & 75.25\textsubscript{$-$3.83} & 78.57\textsubscript{$-$2.43} & 88.18\textsubscript{$-$1.82} & 80.67\textsubscript{$-$2.69} \\
        \addlinespace[3pt]
        \midrule
        \addlinespace[4pt]
        \rowcolor{gray!15}
        \multicolumn{5}{c}{\textit{Substructure-aware reward (Section~\ref{sec:substructure-policy-optimization})}} \\
        \addlinespace[3pt]
        \textit{w/o} similarity  & 76.50\textsubscript{$-$2.58} & 79.89\textsubscript{$-$1.11} & 86.36\textsubscript{$-$3.64} & 80.92\textsubscript{$-$2.44} \\
        \textit{w/o} sub-reward  & 75.50\textsubscript{$-$3.58} & 78.83\textsubscript{$-$2.17} & 86.36\textsubscript{$-$3.64} & 80.23\textsubscript{$-$3.13} \\
        \addlinespace[3pt]
        \bottomrule
        \end{tabular}}
        \captionof{table}{Ablation study: subscripts give the drop.}
        \label{tab:ablation}
    \end{minipage}
\vspace{-1em}
\end{figure}

\subsection{Cost Analysis}
\label{sec:exp-cost}

We next examine the computational cost of all methods.
Figure~\ref{fig:cost-bubble} compares average performance against total monetary cost on the three representative benchmarks, where costs are normalized by the mean across methods within each benchmark before averaging.
We draw two conclusions.
\textbf{(i) ASPO occupies the highest-performance end of the Pareto frontier.}
ASPO achieves the highest average performance of $83.36$ at a normalized cost of $0.69$.
MaAS costs less ($0.40$) but scores $4.66$ points lower, while RADAR incurs a similar cost ($0.70$) yet scores $4.78$ points lower.
\textbf{(ii) ASPO achieves higher performance without relying on greater expenditure.}
Every method more expensive than ASPO performs worse.
In particular, compared with DyLAN, the strongest baseline, ASPO improves performance by $3.88$ points while reducing cost by $67.4\%$.
Appendix~\ref{app:cost-analysis} further reports per-benchmark training and inference costs and compares wall-clock times.

\subsection{Ablation study}
\label{sec:ablation}

In this section, we isolate the contribution of each component of ASPO by removing one at a time.
Table~\ref{tab:ablation} reports the results, where every variant loses accuracy on every benchmark.
\textbf{(i) Ablation of adaptive query selection.}
Removing discovery costs $1.65$ points, highlighting the value of additional exploration for unsolved queries.
Disabling AQSM entirely restores uniform query selection and increases the drop to $2.69$ points.
Moreover, the mechanism helps only if $p_q$ is estimated well: replacing it with the raw empirical success rate costs $2.67$, almost as much.
\textbf{(ii) Ablation of substructure-aware policy optimization.}
The \textit{w/o} sub-reward variant replaces substructure-level rewards with an end-to-end reward (as in MaAS) and costs $3.13$ points on average, since no individual expansion receives credit of its own.
Even with this attribution retained, removing semantic similarity causes a $2.44$-point drop.
Because the similarity term provides finer-grained feedback on intermediate refinements that may not change the task evaluator's score.
\textbf{(iii) Component contributions vary across benchmarks.}
Removing AQSM hurts GSM8K and HotpotQA more than HumanEval+, suggesting that query selection benefits from variation in query difficulty.
Conversely, removing substructure-level rewards hurts GSM8K and HumanEval+ more than HotpotQA, possibly because derivations and programs offer richer signals of partial progress than short answers.

\subsection{Sensitivity analysis}
\label{sec:sensitivity}

\begin{figure}[t!]
    \centering
    \begin{subfigure}[b]{0.66\linewidth}
        \centering
        \includegraphics[width=\linewidth]{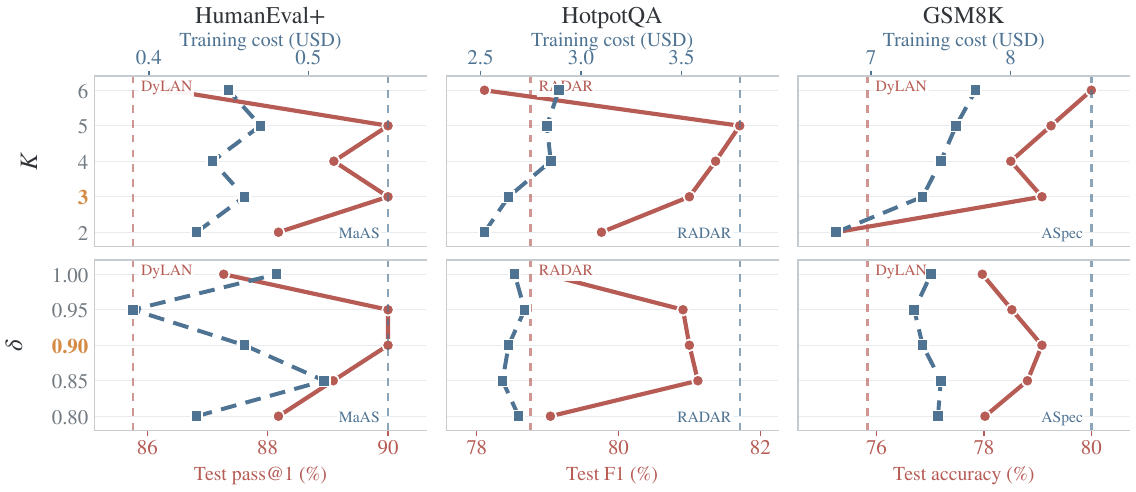}
        \caption{Posterior discount $\delta$ and discovery budget $K$.}
        \label{fig:sensitivity-delta}
        \label{fig:sensitivity-K}
    \end{subfigure}\hfill
    \begin{subfigure}[b]{0.32\linewidth}
        \centering
        \includegraphics[width=\linewidth]{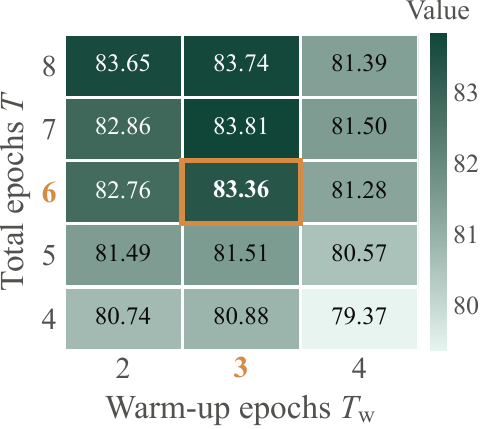}
        \caption{Epoch settings.}
        \label{fig:sensitivity-epochs}
    \end{subfigure}
    \caption{Sensitivity analysis of ASPO hyperparameters over three representative benchmarks.}
    \label{fig:hyperparameter-sensitivity}
\end{figure}

In this section, we examine ASPO's hyperparameter sensitivity for its two main components, following the grouping in Table~\ref{tab:hyperparameters}.
\textbf{(i) Adaptive query selection.}
Figure~\ref{fig:sensitivity-delta} examines $\delta$ and $K$, with red and blue curves showing test performance and training cost, respectively.
The matching vertical references indicate the strongest baseline's performance and the baseline cost immediately above ASPO's default cost.
Performance remains relatively stable for $\delta\in[0.85,0.95]$, a range that includes our default $\delta=0.9$.
More aggressive forgetting ($\delta=0.8$) or retaining all past observations ($\delta=1$) reduces performance, suggesting the importance of balancing recent and historical feedback.
Performance remains relatively stable for $K\in[3,5]$, which includes our default $K=3$, while smaller or larger budgets can cause substantial drops on some benchmarks.
Across most settings, performance changes are smaller than ASPO's margin over the strongest baseline, preserving its advantage.
Exceptions occur at the discovery-budget extremes: $K=2$ on GSM8K and $K=6$ on HotpotQA.
\textbf{(ii) Substructure-aware optimization.}
Figure~\ref{fig:sensitivity-epochs} reports average performance for different training and warm-up epochs.
Performance largely plateaus beyond our default training length of $T=6$.
A moderate warm-up of $T_{\mathrm{w}}=3$ generally performs better than shorter or longer warm-up periods.
We further examine the architecture-related hyperparameters, including $\lambda$ and the maximum numbers of layers and children per node, in
Appendix~\ref{app:architecture-sensitivity}.

%% file: tex/main_table.tex
\begin{table}[t!]
\centering
\footnotesize
\setlength{\tabcolsep}{2.5pt}
\scalebox{0.95}{%
\begin{tabular}{lcccccccc}
\toprule
\textbf{Method} & \textbf{GSM8K} & \textbf{MATH} & \textbf{HotpotQA} & \textbf{MMLU-Pro} & \textbf{MBPP+} & \textbf{HumanEval+} & \textbf{Avg.} & \textbf{AVG-R} \\
\midrule
\rowcolor{gray!15}
\multicolumn{9}{c}{\textit{Single-agent methods}} \\
Vanilla & 48.58\textsubscript{$\pm$1.70} & 47.48\textsubscript{$\pm$2.51} & 76.53\textsubscript{$\pm$2.69} & 61.83\textsubscript{$\pm$1.87} & 75.40\textsubscript{$\pm$2.34} & 81.21\textsubscript{$\pm$2.27} & 65.17 & 10.67 \\
CoT & 71.58\textsubscript{$\pm$1.76} & 48.44\textsubscript{$\pm$1.27} & 76.32\textsubscript{$\pm$1.68} & 73.17\textsubscript{$\pm$1.81} & 76.32\textsubscript{$\pm$2.11} & 83.94\textsubscript{$\pm$1.13} & 71.63 & 8.17 \\
\midrule
\rowcolor{gray!15}
\multicolumn{9}{c}{\textit{Multi-agent collaboration}} \\
AgentVerse & 73.08\textsubscript{$\pm$1.03} & 51.77\textsubscript{$\pm$1.00} & 77.36\textsubscript{$\pm$0.77} & 71.98\textsubscript{$\pm$1.76} & 74.60\textsubscript{$\pm$2.02} & 82.12\textsubscript{$\pm$1.13} & 71.82 & 8.00 \\
DyLAN & \underline{75.83\textsubscript{$\pm$1.96}} & 52.93\textsubscript{$\pm$1.67} & 76.84\textsubscript{$\pm$2.14} & \underline{75.48\textsubscript{$\pm$1.85}} & 77.51\textsubscript{$\pm$1.78} & \underline{85.76\textsubscript{$\pm$0.86}} & \underline{74.06} & \underline{3.50} \\
\midrule
\rowcolor{gray!15}
\multicolumn{9}{c}{\textit{Task-level MAS search}} \\
ADAS & 72.25\textsubscript{$\pm$2.55} & 40.00\textsubscript{$\pm$1.17} & 75.37\textsubscript{$\pm$2.50} & 71.90\textsubscript{$\pm$2.29} & 74.21\textsubscript{$\pm$3.09} & 77.27\textsubscript{$\pm$2.97} & 68.50 & 11.50 \\
AFlow & 70.58\textsubscript{$\pm$0.92} & 52.24\textsubscript{$\pm$1.59} & 67.50\textsubscript{$\pm$0.91} & 74.92\textsubscript{$\pm$1.56} & 76.06\textsubscript{$\pm$1.46} & 84.55\textsubscript{$\pm$0.74} & 70.98 & 7.50 \\
EvoMAS & 72.83\textsubscript{$\pm$1.33} & 49.46\textsubscript{$\pm$1.50} & 76.26\textsubscript{$\pm$0.98} & 74.29\textsubscript{$\pm$0.89} & 77.38\textsubscript{$\pm$2.12} & 84.55\textsubscript{$\pm$1.48} & 72.46 & 6.50 \\
\midrule
\rowcolor{gray!15}
\multicolumn{9}{c}{\textit{Query-level MAS search}} \\
MaAS & 74.67\textsubscript{$\pm$1.45} & \underline{53.67\textsubscript{$\pm$1.92}} & 78.69\textsubscript{$\pm$1.12} & 67.78\textsubscript{$\pm$1.75} & 75.53\textsubscript{$\pm$2.11} & 82.73\textsubscript{$\pm$0.74} & 72.18 & 6.00 \\
RAAS & 73.58\textsubscript{$\pm$1.45} & 53.61\textsubscript{$\pm$2.01} & 77.84\textsubscript{$\pm$1.18} & 69.21\textsubscript{$\pm$1.95} & 74.47\textsubscript{$\pm$1.46} & 84.24\textsubscript{$\pm$1.55} & 72.16 & 6.50 \\
Puppeteer & 73.42\textsubscript{$\pm$1.84} & 44.69\textsubscript{$\pm$2.00} & 77.31\textsubscript{$\pm$1.20} & 73.89\textsubscript{$\pm$0.98} & 78.44\textsubscript{$\pm$1.46} & 69.39\textsubscript{$\pm$1.87} & 69.52 & 7.67 \\
RADAR & 73.33\textsubscript{$\pm$1.84} & 47.28\textsubscript{$\pm$1.60} & \underline{78.76\textsubscript{$\pm$0.70}} & 73.97\textsubscript{$\pm$1.76} & \underline{82.80\textsubscript{$\pm$0.82}} & 83.64\textsubscript{$\pm$0.74} & 73.30 & 5.67 \\
ASPEC & 74.17\textsubscript{$\pm$1.96} & 52.38\textsubscript{$\pm$1.60} & 76.06\textsubscript{$\pm$1.94} & 67.22\textsubscript{$\pm$1.46} & 76.59\textsubscript{$\pm$1.80} & 80.30\textsubscript{$\pm$1.87} & 71.12 & 8.17 \\
\midrule
\textbf{ASPO (Ours)} & \textbf{79.08\textsubscript{$\pm$1.25}} & \textbf{57.89\textsubscript{$\pm$1.11}} & \textbf{81.00\textsubscript{$\pm$1.17}} & \textbf{77.70\textsubscript{$\pm$0.88}} & \textbf{83.47\textsubscript{$\pm$0.82}} & \textbf{90.00\textsubscript{$\pm$1.48}} & \textbf{78.19} & \textbf{1.00} \\
\bottomrule
\end{tabular}}
\caption{Main results with \texttt{gpt-4o-mini}, reported as mean\textsubscript{$\pm$std} over three runs. \textbf{Bold} marks the best and \underline{underline} the runner-up. AVG-R is the average rank (lower is better).}
\label{tab:main}
\vspace{-1em}
\end{table}

%% file: tex/conclusion.tex
\section{Conclusion}
\label{sec:conclusion}

In this paper, we presented ASPO, which combines adaptive query selection and discovery with substructure-level rewards to learn query-specific MAS policies.
ASPO outperforms twelve baselines across six benchmarks and retains its advantage across task domains and LLM backbones.
Ablations validate both components, while cost analyses show favorable performance--cost trade-offs.
ASPO currently searches within a predefined operator pool and uses reference solutions to compute training rewards.
Future work could explore applying ASPO to dynamically changing operator pools, and alternative training signals for tasks without reliable reference solutions (Appendix~\ref{app:limitations}).

%% file: appendix/related_work.tex
\section{Related Work}
\label{app:related-work}

The Introduction (Section~\ref{sec:intro}) reviews the background of MAS search and identifies two limitations of existing query-level methods that motivate ASPO: uniform allocation of training effort across queries and coarse workflow-level rewards.
Although these ideas remain underexplored in MAS search, they connect to broader research on adaptive data selection and credit assignment.
We therefore review work on adaptive training data selection and credit assignment, highlighting their connections to ASPO and the differences in their objectives and mechanisms.

\paragraph{Adaptive Training Data Selection.}
Early work on active learning sought to improve label efficiency by selecting informative examples for annotation~\citep{ALSurvey_settles2009active}.
Representative strategies include uncertainty sampling~\citep{UncertaintySampling_lewis1994sequential} and expected-gradient-length criteria that estimate an example's influence on model updates~\citep{EGL_settles2007multiple}.
Related ideas extend to reinforcement learning, where the budget is spent on task interactions and rollout execution.
Goal GAN generates goals within an intermediate success-rate range to build curricula that adapt to policy capability~\citep{GOID_florensa2018automatic}.
In LLM reinforcement learning, DAPO filters prompt groups with uniformly correct or incorrect outcomes~\citep{DAPO_yu2025dapo}, while GRESO uses historical feedback to skip uninformative prompts before rollout generation~\citep{REF_GRESO}.
MoPPS uses discounted Beta posteriors and posterior sampling to select prompts near a target success rate~\citep{REF_MoPPS}.
ASPO extends this line of work in two ways.
First, it brings adaptive training data selection to query-specific MAS architecture policy learning.
Second, while existing selection criteria often downweight or filter queries at both extremes of the success-rate spectrum, ASPO treats these extremes asymmetrically: consistently solved queries receive less training effort, whereas sampled queries that have never been solved receive additional rollouts through discovery.
This discovery mode helps distinguish queries that remain unsolved due to insufficient exploration from those beyond the capabilities of the current policy and operator pool.

\paragraph{Credit Assignment in Multi-Agent Systems.}
Shared global rewards make it difficult to distinguish individual agents' contributions, motivating more localized credit assignment.
COMA, for example, uses a counterfactual baseline to estimate an agent's action contribution while holding other agents' actions fixed~\citep{REF_COMA}.
In LLM reasoning, process supervision provides feedback on intermediate steps~\citep{REF_LetsVerifyStepByStep}.
Subsequent work uses Monte Carlo value estimation or implicit reward models trained on outcome labels to obtain fine-grained signals without manual step-level annotations~\citep{REF_VinePPO,REF_FreeProcessRewards}.
For interactive agents, GraphGPO assigns transition credit using a state-transition graph assembled from multiple rollouts~\citep{REF_GraphGPO}.
SHARP addresses credit assignment among LLM agents through Shapley-based marginal rewards~\citep{REF_SHARP}.
ASPO differs from these methods in both its optimization target and credit assignment mechanism.
First, these methods primarily assign credit to reasoning steps or agent actions within an execution framework, whereas ASPO uses localized feedback to optimize the MAS architecture policy, guiding operator selection, branching, and termination.
Second, rather than scoring individual reasoning steps or estimating agents' marginal contributions, ASPO assigns credit to each graph expansion by measuring the quality improvement of its aggregated child outputs over the parent output and incorporating discounted returns from its descendant subgraph together with execution costs.

%% file: appendix/limitations.tex
\section{Limitations and Future Work}
\label{app:limitations}

\textbf{Predefined operator pool.}
ASPO composes operators from a predefined pool, so its capabilities remain bounded by the available operators and underlying LLMs.
Discovery explores additional compositions but cannot introduce missing capabilities.
Future work could extend ASPO to dynamically changing operator pools and jointly learn operator design and composition.

\textbf{Dependence on reference-based training feedback.}
ASPO relies on reference solutions and task evaluators for training feedback, limiting its direct application to tasks without reliable references or success criteria.
Moreover, semantic similarity may also overlook valid alternative solutions or reward superficially similar but incorrect responses.
Future work could explore executable verification, human preferences, or learned evaluators as alternative signals for query selection and architectural credit assignment.

%% file: appendix/state_encoder.tex
\subsection{State Encoder Implementation}
\label{app:state-encoder}

\textbf{Node initialization.}
We use a hidden dimension of $d=128$ throughout the graph encoder.
Each operator, including the root, has an embedding initialized from its natural-language description using the frozen sentence encoder \texttt{all-MiniLM-L6-v2}~\citep{MiniLM_NEURIPS2020_3f5ee243}\footnote{\url{https://huggingface.co/sentence-transformers/all-MiniLM-L6-v2}}.
The resulting 384-dimensional embeddings are projected to $d$ dimensions to initialize an operator embedding table.
This initialization projection is applied only once; the table is initially frozen and subsequently optimized during policy training.
At expansion decision $i$, the input feature of node $v$ combines its operator embedding $\mathbf{e}_{o_v}$ with an indicator identifying the current parent $v_i$:
\begin{equation}
    \mathbf{x}_v
    =
    \mathbf{W}_{\mathrm{in}}
    [\mathbf{e}_{o_v}\Vert\mathbb{I}(v=v_i)]
    +\mathbf{b}_{\mathrm{in}},
    \qquad
    \mathbf{x}_v\in\mathbb{R}^{d}.
\end{equation}
Nodes instantiating the same operator share its embedding, while their structural contexts are captured by graph convolution.

\textbf{Graph convolution.}
We encode the current partial graph using two GCN layers~\citep{GNN_kipf2017semisupervised}, implemented with normalized \texttt{GCNConv} layers in PyTorch Geometric.
Edges retain the parent-to-child direction, and every node receives a self-loop.
Both layers preserve the hidden dimension $d$.
We apply ReLU after the first layer, omit the activation after the second layer, and use no dropout.
The resulting representation $\mathbf{h}_v$ incorporates information from the node itself and upstream nodes reachable within two message-passing steps.

\textbf{Graph-level pooling.}
A single-head attention pooling module summarizes the node representations into $\mathbf{h}_{\mathcal{G}_i}$.
Let $\mathcal{V}_i$ denote the nodes of the current partial graph and $\mathbf{u}\in\mathbb{R}^{d}$ a trainable pooling vector.
We compute
\begin{equation}
    \alpha_v
    =
    \frac{\exp(\mathbf{u}^{\top}\mathbf{h}_v)}
    {\sum_{z\in\mathcal{V}_i}\exp(\mathbf{u}^{\top}\mathbf{h}_z)},
    \qquad
    \mathbf{h}_{\mathcal{G}_i}
    =
    \sum_{v\in\mathcal{V}_i}\alpha_v\mathbf{h}_v.
\end{equation}
The pooling vector is shared across queries, while the weights depend on the encoded partial graph and the current-parent indicator.

\textbf{State fusion and output heads.}
We encode the query with the same frozen sentence encoder and apply a separate trainable linear projection to obtain $\mathbf{h}_q\in\mathbb{R}^{d}$.
Concatenating $\mathbf{h}_{\mathcal{G}_i}$, $\mathbf{h}_{v_i}$, and $\mathbf{h}_q$ yields the 384-dimensional state $\mathbf{s}_i$.
The branching head is a two-layer MLP with a $d$-dimensional hidden layer and a Tanh activation.
The operator-selection head linearly projects $\mathbf{s}_i$ into the operator embedding space and computes dot-product scores against the operator embedding table.
The critic uses a linear layer to map the same state to a scalar value.
The graph encoder, pooling module, and state projections are shared by the actor and critic.

%% file: appendix/semantic_similarity.tex
\subsection{Semantic Similarity}
\label{app:semantic-similarity}

The similarity function is task-dependent. For mathematical reasoning and
question answering, we match segments of the generated output $y_i$ and the
reference solution $y^*$ in both directions. Let $\{g_i\}_{i=1}^{m}$ and
$\{r_j\}_{j=1}^{n}$ denote the generated and reference segments, respectively,
and let $e(\cdot)$ be a text encoder. We instantiate $e(\cdot)$ with the
pretrained ROSCOE encoder~\citep{ROSCOE_Golovneva_2022_221207919}\footnote{\url{https://huggingface.co/facebook/roscoe-512-roberta-base}}.
We use its final-layer \texttt{[CLS]} representation with $\ell_2$ normalization
and keep the encoder fixed during policy training. Pairwise cosine similarities
are $c_{ij}=\cos\!\left(e(g_i),e(r_j)\right)$. We compute the average
best-match scores in both directions and combine them using their harmonic mean:
\begin{equation}
    P=\frac{1}{m}\sum_{i=1}^{m}\max_j c_{ij},\qquad
    R=\frac{1}{n}\sum_{j=1}^{n}\max_i c_{ij},\qquad
    \operatorname{sim}_{\mathrm{text}}(y_i,y^*)=
    \begin{cases}
        \frac{2PR}{P+R}, & P,R>0,\\
        0, & \text{otherwise}.
    \end{cases}
    \label{eq:semantic-similarity}
\end{equation}
Here, $P$ measures alignment with the reference, while $R$ measures coverage of
the reference content. We define this similarity as zero if either solution is
empty.

For code generation, we use a model-free AST-based structural similarity. We
parse each Python program into an abstract syntax tree and count the occurrences
of each node-type label, producing count vectors $\mathbf{v}(y_i)$ and
$\mathbf{v}(y^*)$. Operator names are included in the labels: for example,
binary addition is represented as \texttt{BinOp:Add}, and comparisons as
\texttt{Compare:Lt} or \texttt{Compare:Eq}. Identifiers and literal values are
not included, so programs such as $a+b$ and $x+y$ receive the same counts. The
code similarity is the cosine similarity of these vectors, clipped to $[0,1]$:
\begin{equation}
    \operatorname{sim}_{\mathrm{code}}(y_i,y^*)
    =\operatorname{clip}_{[0,1]}\!\left(
    \frac{\mathbf{v}(y_i)^\top\mathbf{v}(y^*)}
    {\lVert\mathbf{v}(y_i)\rVert_2\,\lVert\mathbf{v}(y^*)\rVert_2}
    \right).
\end{equation}
The score is set to zero if parsing fails for either program, or if the reference
program is empty. Thus, the overall $\operatorname{sim}(y_i,y^*)$ uses
$\operatorname{sim}_{\mathrm{code}}$ for code tasks and
$\operatorname{sim}_{\mathrm{text}}$ for mathematical and question-answering
tasks.

%% file: appendix/query_selection_theory.tex
\subsection{Why the Competence-Boundary Score}
\label{app:query-selection-theory}

This section explains why the acquisition score of Section~\ref{sec:aqsm} takes the form $w(q)=\tilde p_q(1-\tilde p_q)$.
We first show that $p_q(1-p_q)$ upper-bounds how much a single policy update can improve a query's success probability, and then characterize the effect of drawing $\tilde p_q$ from the posterior rather than using its mean.

\textbf{Setup.}
Fix a query $q$ and let $\tau=(\mathcal{G},\xi)$ denote a rollout, where $\mathcal{G}\sim\pi_\theta(\cdot\mid q)$ is the sampled architecture and $\xi$ collects the execution randomness of the backbone LLM, which does not depend on $\theta$.
Let $b(\tau)\in\{0,1\}$ indicate whether the final answer satisfies the task's success criterion, and write $p_q(\theta)=\mathbb{E}_\tau[b(\tau)]$ for the success probability tracked by Equation~\ref{eq:query-posterior-update}.
The construction budget bounds the depth and branching of $\mathcal{G}$ and the operator pool $\mathcal{O}$ is finite, so the trajectory space is finite; since $\mathcal{D}_{\mathrm{train}}$ is finite as well, we may set $L^2(\theta)=\max_{q'\in\mathcal{D}_{\mathrm{train}}}\mathbb{E}_\tau\|\nabla_\theta\log\pi_\theta(\mathcal{G}\mid q')\|^2<\infty$, a quantity shared by all queries at the current $\theta$.

\begin{proposition}[Improvement bound]
\label{prop:improvement-bound}
Under the setup above,
\begin{equation}
    \|\nabla_\theta\,p_q(\theta)\|\;\le\;L(\theta)\sqrt{p_q(\theta)\bigl(1-p_q(\theta)\bigr)}.
    \label{eq:gradient-bound}
\end{equation}
If in addition $p_q(\cdot)$ is $\beta$-smooth, then a gradient step $\theta'=\theta+\eta\nabla_\theta p_q(\theta)$ satisfies
\begin{equation}
    p_q(\theta')-p_q(\theta)
    \;\le\;
    \eta\Bigl(1+\tfrac{\beta\eta}{2}\Bigr)L^2(\theta)\;p_q(\theta)\bigl(1-p_q(\theta)\bigr).
    \label{eq:improvement-bound}
\end{equation}
\end{proposition}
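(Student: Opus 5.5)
The plan is to use the score-function (log-derivative) identity together with a baseline and Cauchy--Schwarz, and then the standard ascent lemma for smooth functions. Because the trajectory space is finite and the execution randomness $\xi$ does not depend on $\theta$, we can write $p_q(\theta)=\sum_{\mathcal{G}}\pi_\theta(\mathcal{G}\mid q)\,\bar b(\mathcal{G})$, where $\bar b(\mathcal{G})=\mathbb{E}_\xi[b(\mathcal{G},\xi)]$. Differentiating term by term is legitimate since the sum is finite, and it gives
\[
\nabla_\theta p_q(\theta)=\mathbb{E}_\tau\bigl[b(\tau)\,\nabla_\theta\log\pi_\theta(\mathcal{G}\mid q)\bigr].
\]
The score has zero mean, $\mathbb{E}_\tau[\nabla_\theta\log\pi_\theta(\mathcal{G}\mid q)]=\nabla_\theta\sum_{\mathcal{G}}\pi_\theta(\mathcal{G}\mid q)=0$. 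So we may subtract the constant baseline $p_q(\theta)$ and obtain $\nabla_\theta p_q=\mathbb{E}_\tau[(b(\tau)-p_q)\nabla_\theta\log\pi_\theta(\mathcal{G}\mid q)]$.

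For the first bound, take any unit vector $u$ and apply Cauchy--Schwarz to the scalar $u^\top\nabla_\theta p_q$:
\[
|u^\top\nabla_\theta p_q|\le\sqrt{\mathbb{E}_\tau[(b-p_q)^2]}\,\sqrt{\mathbb{E}_\tau[(u^\top\nabla_\theta\log\pi_\theta)^2]}.
\]
Since $b$ is binary with mean $p_q$, the first factor is $\sqrt{p_q(1-p_q)}$. The second factor is at most $\sqrt{\mathbb{E}_\tau\|\nabla_\theta\log\pi_\theta(\mathcal{G}\mid q)\|^2}\le L(\theta)$, by the definition of $L^2$ as a maximum over training queries. Taking the supremum over $u$ (equivalently, choosing $u$ along the gradient) yields Equation~\ref{eq:gradient-bound}. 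Alternatively, one can apply vector Cauchy--Schwarz directly via $\|\mathbb{E}[X v]\|\le\mathbb{E}[|X|\|v\|]$; this gives the same result.

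For the improvement bound, $\beta$-smoothness gives the quadratic upper bound
\[
p_q(\theta')\le p_q(\theta)+\nabla p_q(\theta)^\top(\theta'-\theta)+\tfrac{\beta}{2}\|\theta'-\theta\|^2.
\]
Substituting $\theta'-\theta=\eta\nabla p_q(\theta)$ turns this into
\[
p_q(\theta')-p_q(\theta)\le\eta\|\nabla p_q\|^2+\tfrac{\beta\eta^2}{2}\|\nabla p_q\|^2=\eta\bigl(1+\tfrac{\beta\eta}{2}\bigr)\|\nabla p_q\|^2.
\]
Squaring Equation~\ref{eq:gradient-bound} then gives Equation~\ref{eq:improvement-bound}.

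There is no serious obstacle; the only delicate point is the first step. One must justify exchanging the gradient with the expectation, which holds because the trajectory space is finite and $\xi$ is $\theta$-independent. One must also ensure $\pi_\theta>0$ on the support, so that the log-derivative is defined; trajectories with zero probability contribute nothing. Finally, the baseline subtraction is the key trick that produces the factor $p_q(1-p_q)$ instead of the weaker $p_q$.
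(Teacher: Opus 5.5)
Your proposal is correct and follows the paper's proof essentially step for step. Both use the score-function identity, subtract the constant baseline $p_q$ via the zero-mean score, apply Cauchy--Schwarz with $\operatorname{Var}[b(\tau)]=p_q(1-p_q)$, and then plug $\theta'-\theta=\eta\nabla_\theta p_q$ into the $\beta$-smoothness inequality before squaring the gradient bound. Your unit-vector form of Cauchy--Schwarz and the justification for exchanging the gradient with the finite sum just make explicit details the paper leaves implicit, as would noting that $\eta\ge 0$ is needed when multiplying the final bound by $\eta(1+\beta\eta/2)$.
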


\begin{proof}
By the score-function identity, $\nabla_\theta p_q=\mathbb{E}_\tau[b(\tau)\nabla_\theta\log\pi_\theta(\mathcal{G}\mid q)]$, where the score involves only $\mathcal{G}$ because $\xi$ does not depend on $\theta$.
Since $\mathbb{E}_\tau[\nabla_\theta\log\pi_\theta(\mathcal{G}\mid q)]=\nabla_\theta\sum_{\mathcal{G}}\pi_\theta(\mathcal{G}\mid q)=0$, subtracting the constant baseline $p_q$ leaves the gradient unchanged, giving $\nabla_\theta p_q=\mathbb{E}_\tau[(b(\tau)-p_q)\nabla_\theta\log\pi_\theta(\mathcal{G}\mid q)]$.
Applying the Cauchy--Schwarz inequality and using $\mathbb{E}_\tau[(b(\tau)-p_q)^2]=\operatorname{Var}[b(\tau)]=p_q(1-p_q)$ for a Bernoulli outcome yields Equation~\ref{eq:gradient-bound}.
For the second claim, $\beta$-smoothness gives $p_q(\theta')\le p_q(\theta)+\langle\nabla_\theta p_q,\theta'-\theta\rangle+\frac{\beta}{2}\|\theta'-\theta\|^2$; substituting $\theta'-\theta=\eta\nabla_\theta p_q$ and then Equation~\ref{eq:gradient-bound} yields Equation~\ref{eq:improvement-bound}.
\end{proof}

\textbf{Interpretation.}
The right-hand side of Equation~\ref{eq:improvement-bound} is the quantity $w(q)$ estimates, up to a factor shared by all queries at the current $\theta$.
The bound therefore states that $p_q(1-p_q)$ caps how much one update can raise a query's success probability.
A query the policy solves almost always ($p_q\to1$) or almost never ($p_q\to0$) cannot be improved much by any single update, regardless of the direction the gradient takes.
Sampling queries in proportion to $w(q)$ thus withholds budget from queries whose improvement is provably capped, which is the sense in which the competence boundary is the informative region.
We sample in proportion to $w(q)$ rather than taking its maximizer because $p_q$ is estimated rather than known, and proportional sampling retains support on every training query.


\begin{proposition}[Effect of posterior sampling]
\label{prop:posterior-sampling}
Let $n_q=S_q+F_q$ and let $\hat p_q=(1+S_q)/(2+n_q)$ denote the mean of the posterior in Equation~\ref{eq:query-success-posterior}. Then
\begin{equation}
    \mathbb{E}_{\tilde p_q}\bigl[w(q)\bigr]
    =\frac{(1+S_q)(1+F_q)}{(2+n_q)(3+n_q)}
    =\frac{n_q+2}{n_q+3}\;\hat p_q(1-\hat p_q).
    \label{eq:posterior-score}
\end{equation}
Moreover, under the update of Equation~\ref{eq:query-posterior-update}, the effective count satisfies $n_q<1/(1-\delta)$ for every query at every iteration.
\end{proposition}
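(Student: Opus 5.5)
The first claim is a direct moment computation for the Beta distribution, and the second is an induction on the update of Equation~\ref{eq:query-posterior-update}. For the first, write $a=1+S_q$ and $b=1+F_q$, so $\tilde p_q\sim\operatorname{Beta}(a,b)$ by Equation~\ref{eq:query-success-posterior} and $a+b=2+n_q$. Since $S_q,F_q\ge 0$ are real-valued discounted counts, $a,b\ge 1>0$ and the Beta moments below are valid even though $S_q,F_q$ need not be integers. I will use the standard moments
\[
\mathbb{E}[\tilde p_q]=\frac{a}{a+b},\qquad \mathbb{E}[\tilde p_q^2]=\frac{a(a+1)}{(a+b)(a+b+1)}.
\]
Expanding $w(q)=\tilde p_q-\tilde p_q^2$ and using linearity of expectation gives
\[
\mathbb{E}[w(q)]=\frac{a}{a+b}\Bigl(1-\frac{a+1}{a+b+1}\Bigr)=\frac{ab}{(a+b)(a+b+1)}.
\]
Substituting $a+b=2+n_q$ yields the first equality in Equation~\ref{eq:posterior-score}. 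For the second equality, note that $\hat p_q=a/(a+b)$ and $1-\hat p_q=b/(a+b)$, so $\hat p_q(1-\hat p_q)=ab/(a+b)^2$. Dividing, the ratio of the two expressions is $(a+b)/(a+b+1)=(n_q+2)/(n_q+3)$, which is the second equality.

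For the bound on $n_q$, I will first add the two lines of Equation~\ref{eq:query-posterior-update}. Because $b_q+(1-b_q)=1$ for every outcome, the effective count evolves deterministically as $n_q\leftarrow\delta n_q+1$. Starting from the uniform prior, $n_q=0$, so after $t$ standard rollouts
\[
n_q^{(t)}=\sum_{k=0}^{t-1}\delta^k=\frac{1-\delta^t}{1-\delta}.
\]
For $\delta\in(0,1)$ this is strictly less than $1/(1-\delta)$ for every $t\ge 0$. The same conclusion follows by induction on $t$: if $n_q<1/(1-\delta)$, then $\delta n_q+1<\delta/(1-\delta)+1=1/(1-\delta)$. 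Iterations in which $q$ receives no standard rollout leave $n_q$ unchanged, so the bound holds at every iteration and for every query.

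The only step needing care is the boundary case $\delta=1$. There $1/(1-\delta)$ is read as $+\infty$ and the statement holds trivially, with $n_q=t$ growing without bound; I will state this convention explicitly. I would also remark on what the second claim means for the first. Since $n_q$ is bounded, the factor $(n_q+2)/(n_q+3)$ stays bounded away from $1$. Moreover the posterior variance, which is $\mathbb{E}[w(q)]/(a+b)$ from the formula above, never falls below roughly $(1-\delta)\,\hat p_q(1-\hat p_q)$. This persistent posterior spread is what keeps sampling of queries from collapsing, and it is the exploration effect the proposition is meant to capture.
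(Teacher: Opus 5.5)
Your proof is correct and follows the paper's argument essentially step for step: the Beta identity $\mathbb{E}[X(1-X)]=\alpha\beta/[(\alpha+\beta)(\alpha+\beta+1)]$ with $\alpha=1+S_q$, $\beta=1+F_q$, division by $\hat p_q(1-\hat p_q)$, and summing the two count updates to get $n_q\leftarrow\delta n_q+1$ and $n_q=(1-\delta^t)/(1-\delta)$. Your explicit handling of $\delta=1$ (allowed by $\delta\in(0,1]$ but passed over in the paper) and of iterations with no standard rollout are useful clarifications. Your closing variance remark is only accurate up to a factor: the lower bound is $\frac{1-\delta}{4-3\delta}\,\hat p_q(1-\hat p_q)$ rather than ``roughly'' $(1-\delta)\,\hat p_q(1-\hat p_q)$, though the proposition does not need it.
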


\begin{proof}
For $X\sim\operatorname{Beta}(\alpha,\beta)$ we have $\mathbb{E}[X(1-X)]=\mathbb{E}[X]-\mathbb{E}[X^2]=\alpha\beta/[(\alpha+\beta)(\alpha+\beta+1)]$.
Setting $\alpha=1+S_q$ and $\beta=1+F_q$ gives the first equality, and dividing by $\hat p_q(1-\hat p_q)=(1+S_q)(1+F_q)/(2+n_q)^2$ gives the second.
Summing Equation~\ref{eq:query-posterior-update} over the two counts yields $n_q\leftarrow\delta n_q+1$, so starting from $n_q=0$ we obtain $n_q=(1-\delta^{t})/(1-\delta)<1/(1-\delta)$ after $t$ updates.
\end{proof}

\textbf{Interpretation.}
Equation~\ref{eq:posterior-score} shows that drawing $\tilde p_q$ from the posterior reproduces the plug-in score $\hat p_q(1-\hat p_q)$ in expectation, shrunk by a factor that approaches one as evidence accumulates.
What posterior sampling adds is dispersion: the score is a random variable whose spread reflects how well $p_q$ is currently estimated, so a query with little recent evidence is not deterministically excluded.
The second claim shows that the discount $\delta$ caps the effective evidence at $1/(1-\delta)$, which is $10$ at our setting $\delta=0.9$, so the posterior never collapses to a point mass and this dispersion persists throughout training.
Beyond tracking changes in policy capability, $\delta$ thus also controls how much exploration the acquisition score retains.
For illustration, at $\delta=0.9$ a competence-boundary query with substantial evidence ($S_q=F_q=4.5$) attains $\mathbb{E}[w(q)]\approx0.229$, an unobserved query attains $1/6\approx0.167$, and a query observed at length but never or always solved attains $\approx0.076$.


%% file: appendix/algorithm.tex
\subsection{ASPO Training Algorithm}
\label{app:aspo-algorithm}

\begin{algorithm}[H]
    \caption{ASPO training}
    \label{alg:aspo}
    \small
    \begin{algorithmic}[1]
        \Require Training set $\mathcal{D}_{\mathrm{train}}$, operator pool $\mathcal{O}$, epochs $T$, warm-up epochs $T_{\mathrm{w}}$, discovery budget $K$
        \Ensure Learned architectural policy $\pi_\theta$
        \State Initialize actor and critic parameters $\theta,\phi$
        \State Set $S_q=F_q=0$ and $\mathrm{solved}[q]=\mathrm{false}$ for every training query $q$
        \For{epoch $e=1,\ldots,T$}
            \State $B_{\mathrm{rem}}\gets|\mathcal{D}_{\mathrm{train}}|$
            \While{$B_{\mathrm{rem}}>0$}
                \If{$e\leq T_{\mathrm{w}}$}
                    \State Take the next batch from a traversal of $\mathcal{D}_{\mathrm{train}}$
                \Else
                    \State Draw $\tilde p_q$ from posteriors, refresh $w(q)$ (Equation~\ref{eq:query-selection-weight}), and sample a batch with weights $w(q)$
                \EndIf
                \State Limit the batch size to at most $B_{\mathrm{rem}}$ queries
                \State Freeze rollout parameters $\theta_{\mathrm{old}}\gets\theta$, $\phi_{\mathrm{old}}\gets\phi$; clear buffer $\mathcal{B}$
                \For{each $(q,y^*)$ in the batch, while $B_{\mathrm{rem}}>0$}
                    \State $d_q\gets(e>T_{\mathrm{w}})\land\neg\mathrm{solved}[q]$ \Comment{Discovery flag}
                    \State $m_q\gets K$ if $d_q$, otherwise $m_q\gets 1$
                    \State Collect $m_q$ DAG rollouts using $\pi_{\theta_{\mathrm{old}}}$ and $\mathcal{O}$ (Section~\ref{sec:mas-construction})
                    \State $B_{\mathrm{rem}}\gets B_{\mathrm{rem}}-m_q$ \Comment{Count all executed rollouts}
                    \For{each collected trajectory}
                        \State Evaluate node outputs and final-answer correctness $b_q\in\{0,1\}$
                        \State Compute $r(a_i)$, $G(a_i)$ and $\hat{A}_i$ (Equations~\ref{eq:substructure-reward}--\ref{eq:ppo-advantage-ratio})
                        \If{$\neg d_q$ or $b_q=1$ or some $\hat{A}_i>0$}
                            \State Add the trajectory, fixed PPO targets and old action probabilities to $\mathcal{B}$
                        \EndIf
                        \State If $\neg d_q$, update $S_q,F_q$ using $b_q$ (Equation~\ref{eq:query-posterior-update})
                        \State $\mathrm{solved}[q]\gets\mathrm{solved}[q]\lor(b_q=1)$
                    \EndFor
                \EndFor
                \State If $\mathcal{B}\neq\emptyset$, update actor, critic and shared encoder with PPO
            \EndWhile
        \EndFor
        \State \Return $\pi_\theta$
    \end{algorithmic}
\end{algorithm}

%% file: appendix/benchmarks.tex
\subsection{Benchmarks and Data Curation}
\label{app:benchmarks}

We evaluate on six benchmarks covering mathematical reasoning (GSM8K, MATH),
general question answering (HotpotQA, MMLU-Pro), and code generation (MBPP+, HumanEval+).
For each benchmark, we select examples from publicly available datasets and split
them into training and test sets at an approximately $1{:}2$ ratio, using the same
split for all compared methods. Sources and selection procedures are detailed below;
Table~\ref{tab:benchmarks} summarizes the sample counts.

\textbf{Selection toward harder problems.}
For GSM8K and MATH we bias selection toward the harder end of each dataset, so that the
comparison is not dominated by queries that a single LLM call already answers: we retain the
lowest-solve-rate GSM8K problems and restrict MATH to Lv.5, as specified below. Absolute
scores on these two splits are therefore well below those reported on the full benchmarks and
should not be compared with them --- in Table~\ref{tab:main}, Vanilla, a single LLM call,
reaches only $48.6\%$ on the resulting GSM8K split. The remaining four benchmarks are not
difficulty-filtered: HotpotQA and MMLU-Pro are sampled uniformly over their annotation grids,
and MBPP+ and HumanEval+ are used in full.

\textbf{Mathematical reasoning.}
\begin{itemize}[leftmargin=1.4em,topsep=2pt,itemsep=2pt]
    \item \textbf{GSM8K}~\citep{GSM8K_cobbe2021training} --- grade-school math word
    problems requiring multi-step arithmetic reasoning.%
    \footnote{\url{https://huggingface.co/datasets/openai/gsm8k}}
    \emph{Selection:} we select the $600$ problems with the lowest solve rates
    from the publicly available GSM8K difficulty annotations%
    \footnote{\url{https://huggingface.co/datasets/lime-nlp/GSM8K_Difficulty}}.
    \emph{Split:} $200$ train / $400$ test.

    \item \textbf{MATH}~\citep{MATH_hendrycks2021measuring} --- competition mathematics
    problems, annotated with $7$ subject types and $5$ difficulty levels (Lv.1--Lv.5).%
    \footnote{\url{https://huggingface.co/datasets/EleutherAI/hendrycks_math}}
    \emph{Selection:} we restrict to the hardest tier (Lv.5) and sample uniformly across
    the $7$ subject types, keeping difficulty fixed at the highest level while balancing
    subject representation. \emph{Split:} $210$ train / $490$ test.
\end{itemize}

\textbf{General question answering.}
\begin{itemize}[leftmargin=1.4em,topsep=2pt,itemsep=2pt]
    \item \textbf{HotpotQA}~\citep{HotpotQA_yang2018hotpotqa} --- multi-hop QA, with two
    question types (\emph{comparison} / \emph{bridge}) and three difficulty labels
    (\emph{easy} / \emph{medium} / \emph{hard}).%
    \footnote{\url{https://huggingface.co/datasets/hotpotqa/hotpot_qa}}
    We use the distractor setting, in which
    each question is accompanied by supporting and distractor paragraphs, so answers are
    judged against the provided passages and no additional retrieval is required.
    \emph{Selection:} we sample uniformly over the type~$\times$~difficulty grid.
    \emph{Split:} $210$ train / $420$ test.

    \item \textbf{MMLU-Pro}~\citep{MMLUPro_wang2024mmlu} --- $10$-option
    multiple-choice questions spanning $14$ subject categories (no difficulty labeling).%
    \footnote{\url{https://huggingface.co/datasets/TIGER-Lab/MMLU-Pro}}
    \emph{Selection:} uniform sampling across the $14$ categories for both splits.
    \emph{Split:} $210$ train / $420$ test.
\end{itemize}

\textbf{Code generation.}
\begin{itemize}[leftmargin=1.4em,topsep=2pt,itemsep=2pt]
    \item \textbf{MBPP+}~\citep{EvalPlus_liu2023your} --- the EvalPlus extension of
    MBPP~\citep{MBPP_austin2021program}, which augments each Python programming task with
    a substantially larger, higher-coverage test suite ($378$ problems; no
    difficulty/type labeling).%
    \footnote{\url{https://huggingface.co/datasets/evalplus/mbppplus}}
    \emph{Split:} $126$ train / $252$ test.

    \item \textbf{HumanEval+}~\citep{EvalPlus_liu2023your} --- the EvalPlus extension of
    HumanEval~\citep{HumanEval_chen2021evaluating}, with strengthened test cases
    ($164$ problems).%
    \footnote{\url{https://huggingface.co/datasets/evalplus/humanevalplus}}
    \emph{Split:} $54$ train / $110$ test.
\end{itemize}

\begin{table}[t!]
\centering
\small
\begin{tabular}{llcccl}
\toprule
\textbf{Benchmark} & \textbf{Task family} & \textbf{Train} & \textbf{Test} & \textbf{Total} & \textbf{Metric} \\
\midrule
GSM8K       & Math       & $200$ & $400$ & $600$ & Accuracy \\
MATH        & Math       & $210$ & $490$ & $700$ & Accuracy \\
HotpotQA    & General QA & $210$ & $420$ & $630$ & F1 \\
MMLU-Pro    & General QA & $210$ & $420$ & $630$ & Accuracy \\
MBPP+       & Coding     & $126$ & $252$ & $378$ & pass@1 \\
HumanEval+  & Coding     & $54$  & $110$ & $164$ & pass@1 \\
\bottomrule
\end{tabular}
\caption{Curated train/test sample counts and evaluation metrics across benchmarks.}
\label{tab:benchmarks}
\end{table}

%% file: appendix/representative_benchmarks.tex
\subsection{Representative Benchmark Selection}
\label{app:representative-benchmarks}

The analyses in Sections~\ref{sec:exp-transfer}, ~\ref{sec:exp-cost}, ~\ref{sec:ablation}, and ~\ref{sec:sensitivity} repeat training and evaluation under many configurations: different execution backbones, ablated variants of the framework, and sweeps over individual hyperparameters.
Running all six benchmarks for every configuration is expensive, so these analyses use one representative benchmark per domain, chosen so that the method ordering it induces best reproduces the ordering obtained from all six.

\textbf{Procedure.}
For each benchmark we rank the thirteen methods of Table~\ref{tab:main} by mean performance.
Averaging the six ranks of a method gives its overall rank, and the vector of overall ranks across methods is the ordering we wish to preserve.
For every benchmark we then compute the Spearman correlation between its own rank vector and this target vector, and within each domain we keep the benchmark with the higher correlation.
Table~\ref{tab:rank-corr} reports the results, selecting GSM8K for mathematical reasoning, HotpotQA for general question answering, and HumanEval+ for code generation.

\textbf{Validation.}
Taking the best benchmark within each domain is a greedy rule, so we also enumerate all $2^3=8$ ways of picking one benchmark per domain and correlate the average rank over the selected three with the overall rank over all six.
The greedy choice is also the best of the eight: GSM8K, HotpotQA and HumanEval+ reach a Spearman correlation of $0.907$, against $0.778$ to $0.891$ for the remaining seven combinations.

\begin{table}[t!]
\centering
\small
\begin{tabular}{llcc}
\toprule
\textbf{Domain} & \textbf{Benchmark} & \textbf{Spearman} & \textbf{Kendall $\tau_b$} \\
\midrule
Mathematical reasoning     & \textbf{GSM8K}      & $\mathbf{0.692}$ & $\mathbf{0.545}$ \\
                           & MATH                & $0.609$          & $0.494$ \\
\midrule
General question answering & \textbf{HotpotQA}   & $\mathbf{0.656}$ & $0.520$ \\
                           & MMLU-Pro            & $0.647$          & $0.520$ \\
\midrule
Code generation            & \textbf{HumanEval+} & $\mathbf{0.727}$ & $\mathbf{0.562}$ \\
                           & MBPP+               & $0.617$          & $0.468$ \\
\bottomrule
\end{tabular}
\caption{Rank correlation between each benchmark's method ordering and the overall ordering across all six benchmarks. \textbf{Bold} marks the benchmark selected for each domain.}
\label{tab:rank-corr}
\end{table}

%% file: appendix/baseline_setups.tex
\subsection{Baseline Setups}
\label{app:baseline-setups}

This section describes the twelve baselines of Section~\ref{sec:exp-setup}: for each, the
search space it optimizes, how that space is optimized, and the configuration we run. We
group them as in the main table --- single-agent, multi-agent collaboration, task-level MAS search, and query-level MAS search.

\textbf{Vanilla.}
A single LLM call that answers the query directly, with no scaffold; it lower-bounds the
benefit of any multi-agent structure. No training. 

\textbf{CoT~\citep{CoT_wei2022chain}.}
Chain-of-thought prompting elicits intermediate reasoning steps before the final answer
rather than answering directly. No training; the method is purely a prompt. Our setup is
identical to Vanilla except for the reasoning instruction, so the two differ only in the
prompt.

\textbf{AgentVerse~\citep{AgentVerse_chen2024agentverse}.}
A hand-designed four-stage loop --- expert recruitment, collaborative decision-making,
action execution, and evaluation --- in which a recruiter assembles a task-specific team
whose proposal is revised over several rounds of feedback. The topology is fixed by design
and is never searched, and there is no gradient training: all adaptation happens through
inference-time LLM interaction. We use the official \texttt{tasksolving} pipeline with its
default \texttt{cnt\_agents}$=2$, \texttt{max\_turn}$=3$, and
\texttt{max\_criticizing\_rounds}$=3$, with the external executor disabled.

\textbf{DyLAN~\citep{DyLAN_liu2024a}.}
Agents form a layered feed-forward network and run in two stages. \emph{Training} consists of
\emph{team optimization}: a label-free trial over a pool of candidate roles on the training
split, in which each node rates its predecessors on a $1$--$5$ scale, the ratings propagate
backward as edge weights into an Agent Importance Score, and the top-scoring agents
form one fixed team per domain. No gradients are involved, and this team is the only artifact
carried forward. \emph{Testing} runs that team alone, adding two query-dependent mechanisms:
a listwise ranker that keeps the two best answers and deactivates the rest, and early
stopping once more than $2/3$ of active agents agree.
The official pools cover three of our benchmarks --- $12$ code writers and reviewers for
HumanEval, $7$ domain experts for MMLU, and $4$ role-free agents for MATH --- so we pair each remaining benchmark with the pool
of matching task type: MBPP+, HotpotQA, and GSM8K respectively. 

\textbf{ADAS~\citep{ADAS_hu2025automated}.}
The search space is an \emph{entire agentic system expressed as Python code} --- a
\texttt{forward} function spanning prompts, tool calls, and control flow. A meta-agent,
conditioned on a growing archive of previously discovered agents, programs the next
candidate; the paper likens archive recombination to LLM-implemented crossover but defines
no explicit population or selection operator. There is no gradient training and no learned
model: each iteration designs a candidate (with two self-reflection rounds), evaluates it on
a validation split, repairs runtime errors for up to five rounds, and appends it to the
archive; validation performance is the only signal. The output is one fixed program per
dataset. We use the official implementation with $30$ generations.

\textbf{AFlow~\citep{AFlow_zhang2025aflow}.}
The search space is a \emph{code-represented workflow}: LLM-invoking nodes connected by
Python control flow, plus a pool of predefined operators. Because the model, temperature,
and output format of each node are frozen, the effective space is prompts, edges, and
operator choices. The search is a Monte-Carlo tree search whose expansion operator is an
LLM, with each tree node holding a complete workflow. Optimization is again gradient-free:
an LLM optimizer rewrites code and prompts, guided by the validation score and by textual
``experience'' propagated back up the tree, and each candidate is executed five times on
validation. The result is one fixed workflow per dataset. We run the official
implementation with $20$ rounds, $4$ samples, and one validation round. Following the
official test convention, the
best-validation round's workflow is copied into the test slot before evaluation.

\textbf{EvoMAS~\citep{EvoMASHiL_wei2026evomas}.}
The search space is the graph topology together with agent roles and prompts, evolved by
seven strategies: three exploratory (diversity expansion, conceptual recombination,
cross-domain hybridization), three exploitative (fine optimization, best-practice
synthesis, role specialization), and one custom operator. A meta-controller fires every
$K$ generations to prune and induce rules and to update the strategy distribution by
multiplicative weights. Evolution is LLM-driven with no gradients; the fitness signal is
training-split accuracy minus a cost term. We run $20$ generations with a population of
$8$ and a reflection interval of $K=5$, since generated workflows can otherwise contain unbounded loops.

\textbf{MaAS~\citep{MaAS_pmlr-v267-zhang25bi}.}
MaAS replaces the search for a single workflow with an \emph{agentic supernet}: a
distribution over layer-wise operator configurations, conditioned on the query. Two
mechanisms make the sampled architecture query-dependent --- a mixture-of-experts top-$p$
router that activates operators until their cumulative score exceeds a threshold (varying
the \emph{width} of each layer), and an early-exit operator that truncates sampling
(varying the \emph{depth}). Training is two-sided: the controller is updated by a
policy-gradient estimator with cost-aware importance weights, while the operators
themselves, which admit no numerical gradient, are revised by agent-generated
\emph{textual} gradients over their prompts, temperatures, and node structure. We use the
official implementation with its default of $4$ samples. MaAS shares ASPO's
operator-selection space.

\textbf{RAAS~\citep{RAAS_yang2026raas}.}
RAAS adopts the MaAS agentic supernet verbatim; its contribution is the \emph{evaluation
signal} rather than a new search space. For each query it samples a cohort of $N$
workflows, scores each by a trimmed mean over $K$ executions, and uses the cohort mean as
a group baseline, yielding a zero-centered advantage in the style of GRPO --- so the
supernet is trained by policy gradient without a learned critic. We run the official
implementation with its defaults of $N=5$ and $K=5$. The upstream code required several
corrections before it produced a valid run; the two consequential ones are a reward bug
(candidates were graded against a field absent from our data, so every sample scored zero
and the controller optimized cost alone) and a grouping bug (a missing key collapsed each
batch into a single advantage group, computing advantages across different problems). We
verified on the first batch of every run that rewards are non-zero and that the number of
per-query groups is correct. RAAS also shares ASPO's operator-selection space.

\textbf{Puppeteer~\citep{Puppter_NEURIPS2025_f1320d2e}.}
Puppeteer searches neither prompts nor code but the \emph{sequence of agent activations}: a
centralized orchestrator selects one agent per reasoning step, so the topology is never
represented explicitly and instead emerges incrementally, recoverable as a directed graph
only after the fact. The orchestrator is a genuinely learned neural policy trained online
with REINFORCE, rewarded by terminal task success minus a compute cost. We use the official
implementation with its defaults (episode length $4$, up to $3$ parallel explorations,
$\lambda=0.1$, $\gamma=0.99$). Puppeteer also shares ASPO's operator-selection space.

\textbf{RADAR~\citep{RADAR_zhang2026radar}.}
RADAR generates the \emph{communication topology} together with each node's role, using a
conditional discrete graph diffusion model: a redundancy-aware ordering network masks nodes
and edges, and a denoiser regenerates the graph node by node conditioned on the query, with
redundancy measured by effective size. Both networks are gradient-trained --- the denoiser
on a variational bound and the ordering network by REINFORCE, using the denoiser's
reconstruction likelihood as its reward --- and the topology is generated per query at
inference. We use the official implementation with adapters for our benchmarks, ordering
the data so that the train/test boundary falls exactly at the intended index.

\textbf{ASPEC~\citep{ASPEC_ICLR2026_9230cf96}.}
ASPEC is an LLM meta-agent method in which a meta-optimizer evolves specialist prompts
alongside the controller's architectural choices over training epochs, with specialists
subject to pruning. We use the official implementation with $2$ samples, $6$ debate rounds, $5$ self-consistency paths, and $10$ mutator turns.

%% file: appendix/hyperparameters.tex
\begin{table}[t!]
    \centering
    \small
    \begin{tabular}{lll}
    \toprule
    \textbf{Hyperparameter} & \textbf{Value} & \textbf{Sensitivity Analysis} \\
    \midrule
    \addlinespace[4pt]
    \rowcolor{gray!15}
    \multicolumn{3}{l}{\textit{Adaptive Query-Selection Mechanism (AQSM)}} \\
    \addlinespace[3pt]
    Posterior discount $\delta$                  & $0.9$  & Section~\ref{sec:sensitivity}, Figure~\ref{fig:sensitivity-delta} \\
    Discovery samples $K$                        & $3$    & Section~\ref{sec:sensitivity}, Figure~\ref{fig:sensitivity-K} \\
    \midrule
    \addlinespace[4pt]
    \rowcolor{gray!15}
    \multicolumn{3}{l}{\textit{Substructure-aware Optimization}} \\
    \addlinespace[3pt]
    Maximum number of layers                     & $5$    & Appendix~\ref{app:architecture-sensitivity}, Figure~\ref{fig:sensitivity} \\
    Maximum number of children per node           & $2$    & Appendix~\ref{app:architecture-sensitivity}, Figure~\ref{fig:sensitivity} \\
    Cost coefficient $\lambda$                   & $0.01$ & Appendix~\ref{app:architecture-sensitivity}, Figure~\ref{fig:sensitivity} \\
    Warm-up epochs $T_{\mathrm{w}}$              & $3$    & Section~\ref{sec:sensitivity}, Figure~\ref{fig:sensitivity-epochs} \\
    Training epochs $T$                          & $6$    & Section~\ref{sec:sensitivity}, Figure~\ref{fig:sensitivity-epochs} \\
    \bottomrule
    \end{tabular}
    \caption{ASPO hyperparameter settings grouped by component, with references to the corresponding sensitivity analyses.}
    \label{tab:hyperparameters}
\end{table}

\subsection{ASPO Implementation Details and Hyperparameter Reference}
\label{app:hyperparameters}

This section describes ASPO's PPO implementation and aggregation rules, and provides a reference table linking its hyperparameter settings to the corresponding sensitivity analyses.

\textbf{PPO implementation.}
We adopt the standard clipped PPO objective~\citep{PPO_schulman2017proximal} with conventional training settings, rather than introducing a new policy-update rule.
We use Adam with a learning rate of $10^{-4}$ for the actor, critic, and shared GNN.
Trainable query- and operator-embedding parameters use a learning rate of $10^{-5}$ with no weight decay.
The clipping threshold is $\epsilon_{\mathrm{clip}}=0.2$, corresponding to the probability-ratio clipping interval $[0.8,1.2]$.
Each batch contains eight queries and is used for at most four PPO updates.
We stop further updates on the batch if the absolute value of the approximate KL divergence exceeds $0.02$.
We set the subgraph discount factor in Equation~\ref{eq:substructure-return} to $\gamma=0.95$: MAS construction spans at most five layers, so a value close to one keeps the reward of a distant descendant nearly undiscounted while still preferring earlier refinements.

\textbf{Aggregation.}
We distinguish the local aggregation used to compute substructure rewards from the final aggregation used to produce the system's answer.
For $\operatorname{Agg}$ in Equation~\ref{eq:substructure-reward}, we directly use the \texttt{ScEnsemble} operator from our operator pool (Appendix~\ref{app:operator-space}) to combine the child outputs into $y_i^{+}$.
For the final aggregation in the progressive DAG rollout (Section~\ref{sec:mas-construction}), we collect the outputs of all leaf operators and apply a task-dependent rule.
For open-ended generation tasks, such as code generation, we use the same ensemble operator.
For multiple-choice and fill-in-the-blank questions, we vote over the leaf answers and select the most frequent answer.

\textbf{Hyperparameter reference.}
Table~\ref{tab:hyperparameters} summarizes the ASPO-specific settings reported in Section~\ref{sec:exp-setup} and links each to its corresponding sensitivity analysis.
Section~\ref{sec:sensitivity} examines the posterior discount $\delta$, discovery budget $K$, and training and warm-up durations.
Appendix~\ref{app:architecture-sensitivity} examines the cost coefficient $\lambda$ and the maximum numbers of layers and children per node.


%% file: appendix/cost_analysis.tex
\subsection{Cost--Performance Trade-offs by Benchmark and Phase}
\label{app:cost-analysis}

This appendix extends the cost--performance analysis in Section~\ref{sec:exp-cost} with two complementary views: training and inference expenditure on GSM8K, HotpotQA, and HumanEval+, and wall-clock time on HumanEval+.
Both are compared with final test performance to examine the monetary and time costs of the resulting systems.

\begin{figure}[t!]
    \centering
    \includegraphics[width=\linewidth]{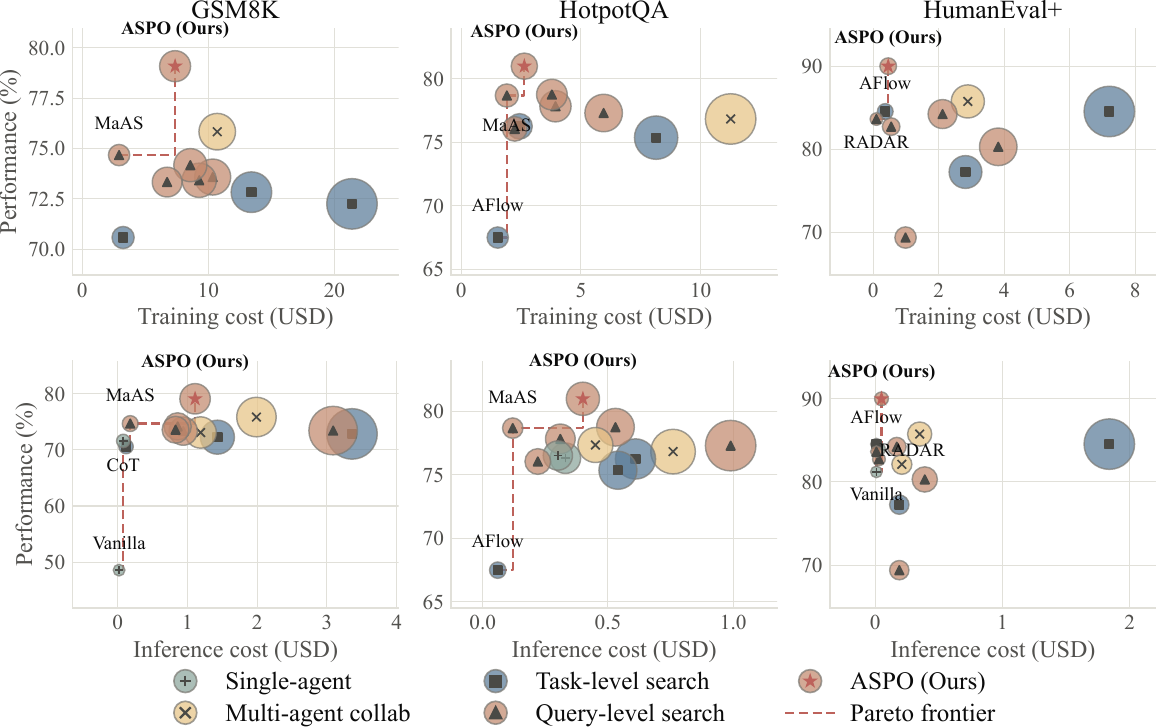}
    \caption{Cost--performance trade-offs on GSM8K, HotpotQA, and HumanEval+.
    The top and bottom rows show training and inference costs, respectively, against the same final test performance: accuracy, F1, and pass@1 for the three columns.
    Costs are in USD, with separate axis ranges and bubble-size scales for each panel.
    Dashed lines indicate the observed Pareto frontiers.
    Vanilla, CoT, and AgentVerse incur no training cost and are omitted from the top row.}
    \label{fig:cost-panels}
\end{figure}

\textbf{Monetary cost.}
Figure~\ref{fig:cost-panels} separates the aggregate comparison in Section~\ref{sec:exp-cost} by benchmark and execution phase.
Unlike the normalized costs in Figure~\ref{fig:cost-bubble}, each panel reports cumulative expenditure in USD for the corresponding benchmark and phase.
\textbf{(i) For training}, ASPO achieves the highest test performance on all three benchmarks with costs of \$7.37, \$2.64, and \$0.46, respectively.
These are lower than DyLAN's corresponding costs of \$10.71, \$11.25, and \$2.89.
The comparison with MaAS is more task-dependent: ASPO costs more to train on GSM8K and HotpotQA, but less on HumanEval+ (\$0.46 versus \$0.55), where it also improves pass@1 by $7.27$ points.
Thus, ASPO's higher performance does not require a consistently larger training expenditure.
\textbf{(ii) At inference}, ASPO costs \$1.11, \$0.40, and \$0.05 on the three benchmarks, below those of DyLAN and EvoMAS in every case.
However, MaAS is cheaper at inference on all three, costing \$0.18, \$0.12, and \$0.03, while trailing ASPO by $4.41$, $2.31$, and $7.27$ performance points.
A possible explanation is that uniform training gives substantial weight to queries already solvable by simple workflows, biasing the policy toward inexpensive architectures.
The policy may consequently under-allocate computation to harder queries, reducing inference cost at the expense of answer quality.
In conclusion, these results place ASPO at the highest-performance end of all six observed Pareto frontiers.

\textbf{Wall-clock time.}
Figure~\ref{fig:wall-time-performance} complements the monetary-cost comparison with training, test, and total wall-clock times on HumanEval+.
ASPO achieves $90.00\%$ pass@1 with $70$ minutes of training and $12$ minutes of testing ($82$ minutes in total).
This is shorter than DyLAN's $89$ minutes and EvoMAS's $199$ minutes, while improving pass@1 by $4.24$ and $5.45$ percentage points, respectively.
Faster alternatives remain available: MaAS and AFlow take $66$ and $42$ minutes in total, respectively, but fall behind ASPO by $7.27$ and $5.45$ pass@1 points.
To sum up, ASPO achieves the highest pass@1 with a total wall-clock time below the median across all evaluated methods.

\begin{figure}[t!]
    \centering
    \begin{minipage}[t]{0.535\linewidth}
        \vspace{0pt}
        \centering
        \includegraphics[width=\linewidth]{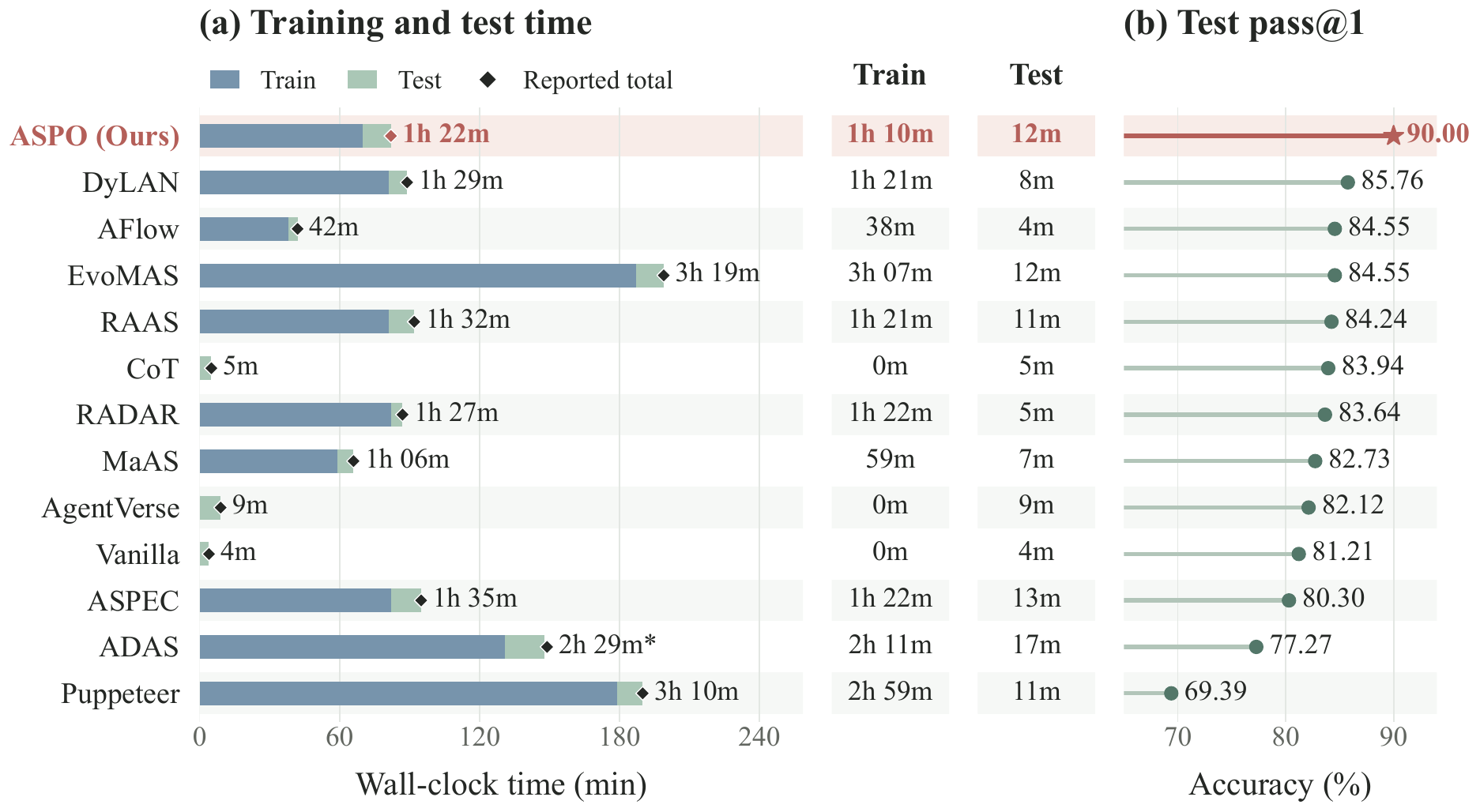}
        \caption{Wall-clock times on HumanEval+.}
        \label{fig:wall-time-performance}
    \end{minipage}\hfill
    \begin{minipage}[t]{0.46\linewidth}
        \vspace{0pt}
        \centering
        \includegraphics[width=\linewidth]{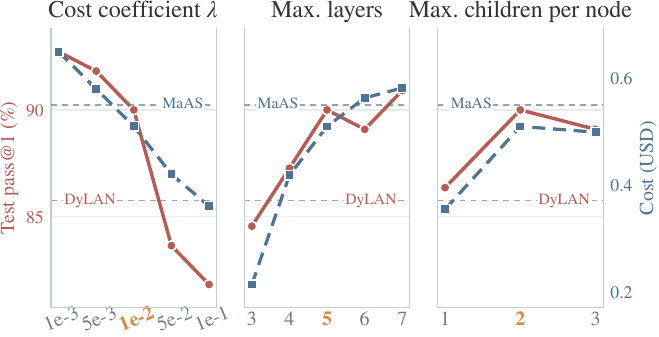}
        \caption{Sensitivity of pass@1 and total cost to the three hyperparameters that affect the searched MAS architecture, on HumanEval+.}
        \label{fig:sensitivity}
    \end{minipage}
\end{figure}

%% file: appendix/architecture_sensitivity.tex
\subsection{Sensitivity to Architecture Hyperparameters}
\label{app:architecture-sensitivity}


Figure~\ref{fig:sensitivity} examines sensitivity to three architecture-related hyperparameters on HumanEval+: \textbf{the cost coefficient $\lambda$} in Equation~\ref{eq:substructure-return}, \textbf{the maximum number of layers}, and \textbf{the maximum number of children per node}.
The defaults ($\lambda=0.01$, five layers, and two children per node) were fixed before these sensitivity experiments.
For $\lambda$, increasing the coefficient from $0.001$ to $0.01$ reduces accuracy by $2.7$ points, with a further $6.4$-point drop at $0.05$, while cost decreases from \$$0.65$ to \$$0.36$ over the full range.
This shows the performance--cost trade-off induced by stronger cost penalties.
For the layer budget, accuracy increases from $84.6$ to $90.0$ as the limit grows from three to five, then largely plateaus.
Extending the limit from five to seven layers adds $14\%$ to cost for a gain of $0.9$ points.
For the branching limit, increasing the maximum from two to three children yields similar accuracy and cost ($90.0$ at \$$0.51$ versus $89.1$ at \$$0.50$), suggesting limited sensitivity to this increase in branching capacity.

%% file: appendix/operator_space.tex
\subsection{Operator Space}
\label{app:operator-space}

We instantiate the operator pool $\mathcal{O}$ with the ten operators below. Each operator is an LLM-based operation that either \emph{generates} a candidate solution, \emph{aggregates} several candidates, \emph{refines} a parent solution, or supplies \emph{reference knowledge}. Unless noted otherwise, every solver/refiner operator emits a structured lane containing a \texttt{reasoning\_process} and a \texttt{final\_answer}; an optional \texttt{\{background\}} note (produced by the Background operator) is prepended to the prompt to aid reasoning and passed through unchanged, and parent-to-child information is threaded as separate \texttt{\{parent\_reasoning\}} and \texttt{\{parent\_answer\}} fields.

\begin{enumerate}[leftmargin=1.4em,labelsep=0.4em,topsep=2pt,itemsep=1pt]
    \item \textbf{Generate.} Produces a solution to the query following a given free-form instruction. It is the most basic generate-class operator and takes only the query (plus optional background).
    \item \textbf{GenerateCoT.} Solves the query with explicit chain-of-thought reasoning~\citep{CoT_wei2022chain}, making every intermediate deduction explicit before stating the final answer.
    \item \textbf{MultiGenerateCoT.} Samples several independent chain-of-thought solutions to the same query, producing a diverse pool of candidate lanes for downstream aggregation~\citep{SelfConsistency_wang2023selfconsistency}.
    \item \textbf{Plan.} A generate-class operator that first designs an explicit step-by-step plan~\citep{PlanAndSolve_wang2023plan} (without solving) and then executes that plan step by step to obtain the answer.
    \item \textbf{Background.} A retrieval / tool-use operator that reads only the query, extracts its core concepts, retrieves encyclopedia knowledge for them, and synthesizes solving-useful reference knowledge (definitions, formulas, identities). It does not solve the problem.
    \item \textbf{ScEnsemble.} Aggregates several candidate solutions~\citep{SelfConsistency_wang2023selfconsistency} by grouping mathematically/semantically equivalent final answers, selecting the plurality answer (breaking ties by reasoning quality), and synthesizing the selected reasoning.
    \item \textbf{SelfRefine.} Takes a parent solution and refines it~\citep{SelfRefine_madaan2023self}, finding and fixing errors or gaps and improving rigor while preserving a correct result.
    \item \textbf{Variation.} Produces a new solution that deliberately follows a different line of reasoning from the parent, exploring an alternative approach to diversify the candidate pool rather than critiquing the parent.
    \item \textbf{Debate.} Runs a sequential two-agent debate~\citep{Debate_du2024improving} (a careful defender and a skeptical challenger) over two rounds with no external judge, returning the challenger's final solution.
    \item \textbf{Programmer.} Solves the query by writing and executing Python code in a sandbox~\citep{PAL_gao2023pal,PoT_chen2023program} (retrying with error feedback), then synthesizes the final answer from the code and its execution result.
\end{enumerate}

Each operator's prompt template is shown below; \texttt{\{$\cdot$\}} denotes an input placeholder filled at execution time.

\begin{PromptBox}[taskblue]{Operator 1: Generate}
    \begin{promptsection}{promptGray}
        \textbf{[Background]}~\texttt{\{background\}}
    \end{promptsection}
    \begin{promptsection}{taskblue}
        \textbf{[Instruction]}~\texttt{\{instruction\}}\\
        \textbf{Problem:}~\texttt{\{query\}}
    \end{promptsection}
    \begin{promptsection}{warmuporange}
        \textbf{[Output]} \texttt{reasoning\_process}; \texttt{final\_answer}.
    \end{promptsection}
\end{PromptBox}

\begin{PromptBox}[taskblue]{Operator 2: GenerateCoT}
    \begin{promptsection}{promptGray}
        \textbf{[Background]}~\texttt{\{background\}}
    \end{promptsection}
    \begin{promptsection}{taskblue}
        \textbf{[Instruction]}~\texttt{\{instruction\}}\\[2pt]
        Solve this using EXPLICIT CHAIN-OF-THOUGHT reasoning: think step by step,
        making every intermediate deduction explicit and in order, and state the
        final answer ONLY after the full reasoning chain.\\
        \textbf{Problem:}~\texttt{\{query\}}
    \end{promptsection}
    \begin{promptsection}{warmuporange}
        \textbf{[Output]}
        \texttt{reasoning\_process}: the complete, self-contained step-by-step chain of thought;
        \texttt{final\_answer}: the final answer only.
    \end{promptsection}
\end{PromptBox}

\begin{PromptBox}[taskblue]{Operator 3: MultiGenerateCoT}
    \begin{promptsection}{taskblue}
        Uses the same prompt as \textbf{GenerateCoT} (Operator 2), executed as
        $N{=}3$ independent samples. The three resulting candidate lanes are passed
        on together for downstream aggregation (e.g., by ScEnsemble or Debate).
    \end{promptsection}
\end{PromptBox}

\begin{PromptBox}[taskblue]{Operator 4: Plan (two calls: design then execute)}
    \begin{promptsection}{taskblue}
        \textbf{[Call 1 --- Design]}\\
        Read the problem and design a STEP-BY-STEP PLAN to solve it. Do NOT solve
        the problem yet --- only lay out the steps. Output EXACTLY this format:
        ``\texttt{Total N steps.}'' followed by one line per step
        (``\texttt{Step k: ...}''). Each step states WHAT to do (the method/action),
        not the computation. Make the FIRST step search for the simplest route
        (identify any reusable structure or shortcut that shortens the problem);
        make the LAST step a verification step (check the result against the
        problem's conditions and constraints).\\
        \textbf{Problem:}~\texttt{\{query\}}\\
        \textbf{Output:}~\texttt{plan}.
    \end{promptsection}
    \begin{promptsection}{exploityellow}
        \textbf{[Call 2 --- Execute]}\\
        You are given the problem and a step-by-step PLAN. EXECUTE THE PLAN: carry
        out each step in order, showing the work for every step, until you reach the
        answer. Stay faithful to the plan's steps.\\
        \textbf{Problem:}~\texttt{\{query\}}\qquad \textbf{Plan:}~\texttt{\{plan\}}
    \end{promptsection}
    \begin{promptsection}{warmuporange}
        \textbf{[Output]}
        \texttt{reasoning\_process}: the full step-by-step execution that follows the
        plan; \texttt{final\_answer}: the final answer only.
    \end{promptsection}
\end{PromptBox}

\begin{PromptBox}[taskblue]{Operator 5: Background (three steps: keyword, retrieval, synthesize)}
    \begin{promptsection}{taskblue}
        \textbf{[Step 1 --- Keyword extraction]}\\
        Read the problem and identify its 1--3 CORE concepts/topics --- the things
        that, if looked up in an encyclopedia, would supply useful definitions and
        formulas for solving it. Output them as short search terms (matching an
        encyclopedia article title), comma-separated.\\
        \textbf{Problem:}~\texttt{\{query\}}\qquad \textbf{Output:}~\texttt{keywords} (1--3, comma-separated).
    \end{promptsection}
    \begin{promptsection}{promptGray}
        \textbf{[Step 2 --- Retrieval]} (executed in code, no LLM call)\\
        For each keyword, search Wikipedia and fetch the best-matching article's
        introduction; concatenate the extracts as \texttt{\{wiki\}}.
    \end{promptsection}
    \begin{promptsection}{exploityellow}
        \textbf{[Step 3 --- Synthesize]}\\
        You are compiling a BACKGROUND note of reference knowledge that will help
        someone solve the problem. You are given the problem, the extracted
        keywords, and encyclopedia extracts for those keywords
        (\texttt{\{query\}}, \texttt{\{keywords\}}, \texttt{\{wiki\}}). Write a
        concise background of the knowledge most useful for solving THIS problem
        (relevant definitions, key formulas, identities, theorems).
        \textbf{Strict rules:} do NOT solve the problem or state/compute its final
        answer; provide only general reference facts, distilled from the extracts
        and your own knowledge, ignoring irrelevant parts.
    \end{promptsection}
    \begin{promptsection}{warmuporange}
        \textbf{[Output]} \texttt{background}.
    \end{promptsection}
\end{PromptBox}

\begin{PromptBox}[taskblue]{Operator 6: ScEnsemble}
    \begin{promptsection}{promptGray}
        \textbf{[Background]}~\texttt{\{background\}}
    \end{promptsection}
    \begin{promptsection}{taskblue}
        You are combining several candidate solutions to the SAME problem; each
        candidate contains a reasoning process and a final answer. Follow this
        decision procedure:
        \begin{enumerate}[leftmargin=1.5em,nosep]
            \item Extract the final answer from every candidate; treat equivalent answers as the SAME answer.
            \item Count how many candidates support each answer. If one answer has a strict plurality/majority, select a candidate supporting it (independent agreement is strong evidence).
            \item If the highest vote is tied, do NOT break the tie by candidate order --- inspect the competing reasoning, verify it yourself, and choose the most sound one (reasoning quality takes priority over vote count).
            \item Write a self-contained synthesis of the selected answer's reasoning, correcting any clear mistake; do not invent an answer unsupported by the candidates.
        \end{enumerate}
        \textbf{Problem:}~\texttt{\{query\}}\qquad \textbf{Candidate solutions:}~\texttt{\{candidates\}}
    \end{promptsection}
    \begin{promptsection}{warmuporange}
        \textbf{[Output]}
        \texttt{reasoning\_process}: a complete comparison and synthesis (state the
        vote pattern, verify the selected derivation, explain any tie-break);
        \texttt{solution\_letter}: the single letter (A, B, C, \dots) of the selected candidate.
    \end{promptsection}
\end{PromptBox}

\begin{PromptBox}[taskblue]{Operator 7: SelfRefine}
    \begin{promptsection}{promptGray}
        \textbf{[Background]}~\texttt{\{background\}}
    \end{promptsection}
    \begin{promptsection}{taskblue}
        Your task is to REFINE a proposed solution: find and fix any errors or gaps,
        improve the rigor, and keep the result correct. You are given the problem
        plus the parent solution split into its reasoning and its final answer.\\
        \textbf{Problem:}~\texttt{\{query\}}\\
        \textbf{Parent --- reasoning:}~\texttt{\{parent\_reasoning\}}\\
        \textbf{Parent --- final answer:}~\texttt{\{parent\_answer\}}
    \end{promptsection}
    \begin{promptsection}{warmuporange}
        \textbf{[Output]}
        \texttt{reasoning\_process}: the corrected, self-contained reasoning;
        \texttt{final\_answer}: the final answer only.
    \end{promptsection}
\end{PromptBox}

\begin{PromptBox}[taskblue]{Operator 8: Variation}
    \begin{promptsection}{promptGray}
        \textbf{[Background]}~\texttt{\{background\}}
    \end{promptsection}
    \begin{promptsection}{taskblue}
        Produce a NEW solution that deliberately follows a DIFFERENT line of
        reasoning from the parent solution. The goal is to EXPLORE an alternative
        approach (to diversify the candidate pool), NOT to critique or merely
        rephrase the parent.\\
        \textbf{Problem:}~\texttt{\{query\}}\\
        \textbf{Parent --- reasoning:}~\texttt{\{parent\_reasoning\}}\qquad
        \textbf{Parent --- final answer:}~\texttt{\{parent\_answer\}}
        \begin{enumerate}[leftmargin=1.5em,nosep]
            \item Solve correctly; the final answer must be fully derivable from your reasoning. If the parent's answer is already correct, reach that SAME answer by a different route.
            \item Take a FUNDAMENTALLY different approach (e.g., a different decomposition, working backward vs.\ forward, exploiting structure vs.\ brute force). Do not paraphrase or reorder the parent's steps.
            \item Do NOT critique the parent --- this is exploration, not evaluation.
            \item Before finalizing, VERIFY your own answer against the problem's conditions.
        \end{enumerate}
        Your reasoning must be COMPLETE and self-contained (readable without seeing the parent).
    \end{promptsection}
    \begin{promptsection}{warmuporange}
        \textbf{[Output]} \texttt{reasoning\_process}; \texttt{final\_answer}.
    \end{promptsection}
\end{PromptBox}

\begin{PromptBox}[taskblue]{Operator 9: Debate (sequential two agents, two rounds, four calls)}
    \begin{promptsection}{taskblue}
        \textbf{[Round 0 --- Debater A: careful \& rigorous]}\\
        A solution has been proposed (reasoning and final answer supplied
        separately). Examine it: if correct, defend it with stronger justification
        and clearer steps; if flawed, produce a corrected solution. Re-derive from
        scratch rather than trusting the proposal; first look for a structural
        shortcut, then VERIFY by checking the result against the problem's
        conditions. If the proposal is correct, keep the answer UNCHANGED. Produce
        your OWN complete, self-contained derivation.\\
        \textbf{Inputs:}~\texttt{\{query\}}, \texttt{\{parent\_reasoning\}}, \texttt{\{parent\_answer\}}, \texttt{\{background\}}.
    \end{promptsection}
    \begin{promptsection}{exploreyellow}
        \textbf{[Round 0 --- Debater B: skeptical \& creative]}\\
        A solution has been proposed and Debater A has responded. Critically examine
        BOTH for logical flaws, unstated assumptions, edge cases, or better
        approaches. Target common failure points: a missed simplifying insight; an
        arithmetic/sign slip; an unchecked edge case or unmet constraint;
        incomplete or double-counted cases. Even when the proposal and A agree,
        independently verify --- agreement can hide a shared mistake. Produce your
        OWN complete, self-contained derivation.\\
        \textbf{Inputs:}~\texttt{\{query\}}, \texttt{\{parent\_reasoning\}}, \texttt{\{parent\_answer\}}, \texttt{\{a\_reasoning\}}, \texttt{\{a\_answer\}}, \texttt{\{background\}}.
    \end{promptsection}
    \begin{promptsection}{exploityellow}
        \textbf{[Round 1 --- Debater \{role\} (A then B): final round]}\\
        Read your opponent's latest response: (1) are there valid points that should
        change your answer? (2) flaws to rebut? (3) can you strengthen your own
        solution? Then produce your final, definitive solution and verify it against
        the problem's conditions. If your previous answer is already correct, keep it
        UNCHANGED.\\
        \textbf{Inputs:}~\texttt{\{query\}}, \texttt{\{own\_reasoning\}}, \texttt{\{own\_answer\}}, \texttt{\{opp\_reasoning\}}, \texttt{\{opp\_answer\}}, \texttt{\{background\}}.
    \end{promptsection}
    \begin{promptsection}{warmuporange}
        \textbf{[Output]} \texttt{reasoning\_process}; \texttt{final\_answer} (the operator returns Debater B's final-round output).
    \end{promptsection}
\end{PromptBox}

\begin{PromptBox}[taskblue]{Operator 10: Programmer (two calls: code then synthesize)}
    \begin{promptsection}{promptGray}
        \textbf{[Background]}~\texttt{\{background\}}
    \end{promptsection}
    \begin{promptsection}{taskblue}
        \textbf{[Call 1 --- Code generation]}\\
        Solve the problem by writing EXACT Python. Computing with code avoids the
        slips that ruin long hand derivations, so lean on it for heavy or
        error-prone calculation. Classify the problem and pick the right tools
        (prefer exact/symbolic computation); use the background to set up the
        computation correctly; the parent reasoning/answer are REFERENCE ONLY (may
        be wrong; re-derive the setup).
        \textbf{Rules:} define \texttt{def solve()} taking NO arguments and RETURNING
        the result; return EXACT values; no \texttt{print}/plots/file/network I/O;
        give EVERY loop a finite termination bound so it finishes well under the
        sandbox time limit.\\
        \textbf{Problem:}~\texttt{\{query\}}\\
        \textbf{Parent reasoning (reference only):}~\texttt{\{parent\_reasoning\}}\qquad
        \textbf{Parent final answer (reference only):}~\texttt{\{parent\_answer\}}\\
        \textbf{Feedback (on a failed run):}~\texttt{\{feedback\}} --- the previous
        code and its error are fed back so the model can fix it (up to three tries).\\
        \textbf{Output:}~\texttt{code} (complete, self-contained, ready to run), then executed in a sandbox.
    \end{promptsection}
    \begin{promptsection}{exploityellow}
        \textbf{[Call 2 --- Synthesis]}\\
        You solved the problem by writing and running Python. Given the problem, the
        code, and the execution result
        (\texttt{\{query\}}, \texttt{\{code\}}, \texttt{\{status\}}, \texttt{\{run\_output\}}),
        turn this into the final answer:
        \texttt{reasoning\_process} explains the approach the code took and interprets
        the result; \texttt{final\_answer} is DERIVED from the execution result in the
        exact form asked. If the status is an error, or the result is empty or clearly
        inconsistent, do NOT trust it --- re-reason using the background.
    \end{promptsection}
    \begin{promptsection}{warmuporange}
        \textbf{[Output]} \texttt{reasoning\_process}; \texttt{final\_answer}.
    \end{promptsection}
\end{PromptBox}